\documentclass[10pt,a4paper,logo]{paper}

\usepackage[all]{hypcap}
\usepackage[authoryear,round]{natbib}

\usepackage{amsmath,amsfonts,bm}

\def\eqref#1{equation~\ref{#1}}

\def\1{\bm{1}}

\DeclareMathAlphabet{\mathsfit}{\encodingdefault}{\sfdefault}{m}{sl}
\SetMathAlphabet{\mathsfit}{bold}{\encodingdefault}{\sfdefault}{bx}{n}

\newcommand{\E}{\mathbb{E}}

\newcommand{\R}{\mathbb{R}}

\DeclareMathOperator*{\argmax}{arg\,max}

\usepackage{multirow}
\usepackage{mathtools}
\usepackage{makecell}
\usepackage{algorithm}
\usepackage{algpseudocode}
\usepackage{array}
\usepackage{adjustbox}
\usepackage{scalerel}
\usepackage[most,skins,theorems]{tcolorbox}
\usepackage{subcaption}
\usepackage{pdflscape}
\usepackage{float}
\usepackage{titletoc}

\titleformat{\paragraph}[runin]{\normalfont\bfseries}{}{0pt}{#1}
\titlespacing*{\paragraph}{0pt}{0.5ex plus 0.3ex minus 0.2ex}{1em}

\newtheorem{proposition}{Proposition}

\newcommand{\cA}{\mathcal{A}}
\newcommand{\cS}{\mathcal{S}}

\newcommand{\cD}{\mathcal{D}}

\newcommand{\Normal}{\mathcal{N}}
\newcommand{\dd}{\,\mathrm{d}}
\newcommand{\Aexact}{\tilde a}
\newcommand{\Ahat}{\hat a}

\newcommand{\method}{SQAM}

\usepackage[capitalise,noabbrev]{cleveref}
\crefname{appendix}{Appendix}{Appendices}
\crefname{subappendix}{Appendix}{Appendices}
\crefname{subsubappendix}{Appendix}{Appendices}
\crefname{proposition}{Proposition}{Propositions}
\crefname{algorithm}{Algorithm}{Algorithms}

\definecolor{GoogleRed}{RGB}{234, 67, 53}
\definecolor{GoogleBlue}{RGB}{66, 133, 244}
\definecolor{GoogleGreen}{RGB}{52, 168, 83}
\definecolor{ourlightblue}{RGB}{232, 240, 254}

\definecolor{title_blue}{HTML}{204899}
\definecolor{cite_blue}{HTML}{044dc1}
\hypersetup{
    colorlinks = true,
    citecolor  = {cite_blue},
    linkcolor  = {cite_blue},
    urlcolor   = {cite_blue},
}

\title{Q-Learning with Scalar Adjoint Matching}

\reportnumber{}

\author[1]{Yonghoon Dong}
\author[1,2]{Minsung Yoon}
\author[1]{Jaehyuk Kim}
\author[1]{Jungwoo Park}
\author[1]{Changyeon Kim}
\author[1,2]{Jinwoo Shin}
\affil[1]{KAIST}
\affil[2]{RLWRLD}

\correspondingauthor{Correspondence to: \href{mailto:yonghoon.dong@kaist.ac.kr,jinwoos@kaist.ac.kr}{\texttt{\{yonghoon.dong, jinwoos\}@kaist.ac.kr}}.}

\begin{abstract}
Flow policies capture rich and diverse action distributions, and fine-tuning them with off-policy RL to improve beyond the demonstrations has drawn growing interest. However, fine-tuning a flow policy against a learned value function is not trivial, because the policy generates its action over many flow steps. Adjoint matching offers a principled way to update the flow model itself by propagating value information from the final action back to each flow step, but it requires a vector--Jacobian product through the policy at every step, a cost that grows with the number of flow steps and the policy size. We observe that the batch-averaged velocity Jacobian of pretrained flow policies concentrates on its diagonal. Motivated by this finding, we derive a closed-form scalar adjoint that scales the value gradient at the final action by the flow time, eliminating the per-step vector--Jacobian products. We further find that controlling the critic's value at policy-generated actions is particularly important under the scalar adjoint. Based on these findings, we propose Q-learning with Scalar Adjoint Matching (SQAM), which combines the scalar adjoint with a value penalty at those actions. SQAM's gains concentrate on the four hardest OGBench domains, where its success rate exceeds that of the strongest baseline in each domain by 18 to 35 percentage points. To test whether SQAM extends to large pretrained policies, we also fine-tune a vision-language-action policy on a real bimanual robot. SQAM improves over supervised fine-tuning on all three tasks. 

\vspace{0.5em}
\noindent Code: \href{https://github.com/yonghdong/sqam}{\nolinkurl{github.com/yonghdong/sqam}} \quad Blog: \href{https://yonghdong.github.io/blog/sqam/}{\nolinkurl{yonghdong.github.io/blog/sqam}}
\vspace{-0.1in}
\end{abstract}

\begin{document}

\maketitle

\section{Introduction}
Flow policies represent rich and multimodal action distributions, and they have become a common
policy class for offline and offline-to-online RL~\citep{park2025flowqlearning,li2026qlearningadjointmatching,dong2026trustregionqadjoint,chi2024diffusionpolicyvisuomotorpolicy} as well as the backbone of recent large pretrained robot policies~\citep{black2025pi05,bjorck2025gr00t}.
Improving such a policy beyond the data it was trained on calls for fine-tuning it with off-policy RL
against a learned action-value function~\citep{intelligence2025pi06vlalearnsexperience,wang2026learningdeployingfleetscalereinforcement,dong2026expoftsampleefficientreinforcementlearning,xu2026rltokenbootstrappingonline,li2025grrlgoingdexterousprecise,chen2025conrft}. 

However, fine-tuning a flow policy to maximize a learned value function is not a trivial problem because of its multiple sampling procedure~\citep{park2025flowqlearning,zhang2026sacflowsampleefficientreinforcement}. Most methods therefore freeze the policy and work outside it, adding a separately learned residual to the actions it produces~\citep{dong2026expostablereinforcementlearning,xiao2025selfimprovingvisionlanguageactionmodelsdata} or learning with RL which input noise to feed the frozen policy~\citep{wagenmaker2025steeringdiffusionpolicylatent}, which leaves the pretrained policy intact during training. The improvement then rests on a separately learned component rather than on the generative model itself, so it is either bounded by what the frozen policy can express or forfeits the expressiveness of the flow policy. On the other hand, adjoint matching~\citep{domingoenrich2025adjointmatchingfinetuningflow,li2026qlearningadjointmatching,dong2026trustregionqadjoint} provides a theoretically grounded alternative for directly updating the flow model with value functions. It propagates value information from the final action back to each flow step to guide the model's updates. However, this requires a vector--Jacobian product through the policy at every step, so the computational cost grows with the number of flow steps and policy size.

In this work, we observe that the batch-averaged velocity Jacobian of a pretrained flow policy concentrates on its diagonal (\cref{fig:jacobian}). This suggests that the vector--Jacobian product adjoint matching
pays for at every flow step may not be necessary. Inspired by this finding, we approximate the
Jacobian by a scalar multiple of the identity, under which the propagation admits a closed form that
scales the value gradient at the final action by the flow time (\cref{prop:split}). We call this
the scalar adjoint. It needs a single critic gradient at the sampled action and no vector--Jacobian
product through the policy, and \cref{sec:ladder} shows that it preserves the qualitative trends
of the exact propagation.

We further find that the effectiveness of the scalar adjoint depends on the value-learning scheme. We investigate this dependence and identify the critic's value at policy-generated actions as particularly important. Our regularization ablations isolate this dependency. Methods that regulate the
critic at these actions improve performance, while methods that act only on dataset~\citep{kostrikov2021offlinereinforcementlearningimplicit} or action distance~\citep{tarasov2023revisitingminimalistapproachoffline} provide little improvement
(\cref{sec:gain}). Based on these findings, we propose \emph{Q-learning with Scalar Adjoint Matching} (\method{}), which combines the scalar adjoint with a value penalty at policy-generated actions.

Through experiments on the $50$ OGBench~\citep{ogbench_park2025} tasks, \method{} outperforms prior methods in both offline RL and offline-to-online RL, and the gains concentrate where the benchmark is hardest. On the four hardest domains it adds $18$ to $35$ percentage points of offline success rate over the best
baseline in each, and it reaches $75\%$ against $68\%$ for the strongest baseline overall. To assess whether \method{} extends to large pretrained policies, we apply it to offline-to-online RL fine-tuning of a vision-language-action model~\citep{kim2026rldx1technicalreport} on a real-world bimanual robot~\citep{rainbow_rby1} equipped with two $20$-DoF dexterous hands~\citep{wuji_hand2}. We observe that \method{} clearly improves over supervised fine-tuning on all three manipulation tasks, while residual fine-tuning~\citep{dong2026expoftsampleefficientreinforcementlearning} struggles in the same setting.

\begin{figure}[t]
\begin{center}
\includegraphics[width=0.97\linewidth]{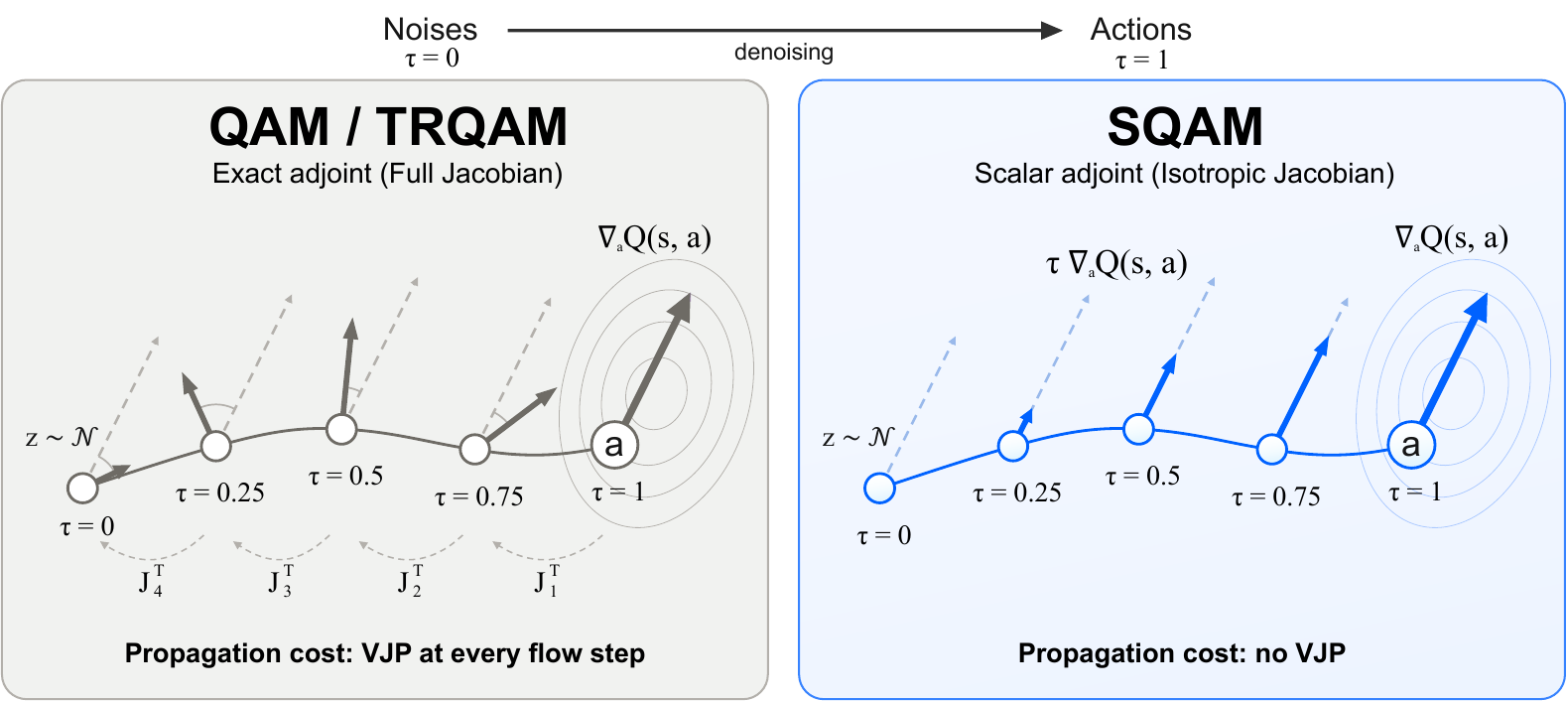}
\end{center}
\caption{Adjoint matching propagates the critic's gradient at the final action back to the flow
steps that produced it, and the propagated gradient updates the flow model. \textbf{Left:} QAM and
TRQAM integrate the lean adjoint ODE backward along the sampling chain, which takes a
vector--Jacobian product $J_i^{\top}$ through the policy at every flow step, so the cost grows with the
number of flow steps and the policy size. \textbf{Right:} \method{} instead scales the gradient at the
final action, $\nabla_aQ(s,a)$, by the flow time $\tau$, with no vector--Jacobian product at any flow
step.}
\label{fig:transport}
\vspace{-0.17in}
\end{figure}

\vspace{0.01in}
\paragraph{Contributions.}
We highlight the key contributions of our paper below:

\begin{itemize}[leftmargin=*, itemsep=2pt, parsep=0pt, topsep=0pt]
\item We introduce \method{}, an off-policy fine-tuning algorithm for flow policies that reduces the
compute cost of adjoint matching by replacing its per-step vector--Jacobian products with the
closed-form scalar adjoint.

\item We show that the scalar adjoint makes performance depend on the value learning scheme, and we
find that what matters is controlling the critic's value at the actions the current policy generates.

\item \method{}'s gains concentrate on the four hardest OGBench domains, where it exceeds the
strongest baseline by $18$ to $35$ percentage points in success rate, and it extends to fine-tuning a
pretrained vision-language-action policy on a real robot.
\end{itemize}

\section{Background}
\label{sec:setup}
\vspace{0.01in}
\paragraph{Flow matching and flow policies.}
Flow matching trains a velocity field $v_\theta(x,\tau)$ that
transports samples from a source distribution $p_0$ to a target distribution $p_1$ through
$\dd X_\tau=v_\theta(X_\tau,\tau)\dd\tau$ \citep{lipman2022flow,albergo2023stochastic,liu2022flowstraightfastlearning}. We consider the commonly used linear interpolation path $X_\tau=(1-\tau)X_0+\tau X_1$, for which the training objective is
\begin{align*}
    \mathcal{L}_{\mathrm{FM}}(\theta)\;=\;\E_{\tau,X_0,X_1}\Big[\big\|v_\theta(X_\tau,\tau)-(X_1-X_0)\big\|^2 \Big].
\end{align*}
A flow policy applies this to action generation. Given a state $s$, an action is drawn by integrating
$\dd X_\tau=v_\theta(s,X_\tau,\tau)\dd\tau$ from $X_0\sim\Normal(0,I)$ and taking the endpoint $X_1$
\citep{park2025flowqlearning}.

\vspace{0.01in}
\paragraph{Reinforcement learning.}
We consider a Markov decision process~\citep{sutton2018reinforcement} $(\cS,\cA,P,r,\gamma,\rho_0)$ with state space $\cS$, action space $\cA$, transition kernel $P:\cS\times\cA\to\Delta(\cS)$, reward $r:\cS\times\cA\to\R$, initial state distribution $\rho_0\in\Delta(\cS)$, and discount $\gamma\in[0,1)$, where $\Delta(\cdot)$ is the set of probability distributions over a set. The goal is to learn a policy $\pi:\cS\to\Delta(\cA)$ maximising the expected discounted return $\E_{\rho_0,\pi,P}\big[\sum_{t\ge0}\gamma^{t}r(s_t,a_t)\big]$. We follow the actor-critic approach, which learns an action-value function $Q^{\pi}(s,a)=\E_{\pi}\big[\sum_{k\ge0}\gamma^{k}r(s_{t+k},a_{t+k})| s_t=s,a_t=a\big]$, also called
the critic, and improves the policy by maximizing it. Our setting is off-policy fine-tuning of a pretrained flow policy. Training draws transitions $(s, a, r, s')$ from a replay buffer $\cD$, which initially holds a dataset collected by a behavior policy that may be unknown or suboptimal, and on which the flow policy $\pi_{\mathrm{base}}(\cdot|s)$ is pretrained by behavior cloning. Fine-tuning first runs offline on that data alone, and then online, where $\cD$ grows with the rollouts the fine-tuned policy collects.

\vspace{0.01in}
\paragraph{Adjoint matching.}
A flow policy generates its action over many flow steps, and the critic
only tells us how good the final action is. To improve the flow policy, each intermediate step also needs to know which direction to push its state so that the final action gets a higher value. Adjoint
matching~\citep{domingoenrich2025adjointmatchingfinetuningflow,li2026qlearningadjointmatching} computes this direction for every
step. Starting from the critic's gradient at the final action, it passes that gradient back through the
chain one step at a time by integrating the lean adjoint ODE,
\begin{align}
     \dd \Aexact_\tau= -{\Aexact_\tau}^{\!\top}\nabla_{X_\tau}\Big(2v^{\mathrm{base}}(X_\tau,\tau)-\frac{1}{\tau}X_\tau\Big) \dd \tau,
    \qquad \Aexact_1=-\nabla_{X_1}Q^\pi(s,X_1) ,
    \label{eq:adjoint}
\end{align}
and trains the fine-tuned velocity $v^{\mathrm{ft}}_\theta$ to match the gradient that \cref{eq:adjoint}
delivers to each step,
\begin{align}
    \mathcal{L}_{\mathrm{Adj-Match}}(\theta)\;=\;\E\bigg[\sum_\tau\Big\|\frac{2}{\sigma(\tau)}\big(v^{\mathrm{ft}}_\theta(X_\tau,\tau)-v^{\mathrm{base}}(X_\tau,\tau)\big)+\sigma(\tau)\Aexact_\tau\Big\|^2\bigg] ,
    \label{eq:am}
\end{align}
where $\sigma(\tau)$ is the noise schedule of the sampler. \citet{dong2026trustregionqadjoint} stabilize the
optimization by constraining the fine-tuned sampler to a path-space KL budget
$\varepsilon_{\mathrm{KL}}$ (\cref{app:baselines}).

\section{SQAM: Q-Learning with Scalar Adjoint Matching}
\label{sec:method}

\subsection{A closed-form scalar adjoint under an isotropic velocity Jacobian}
\label{sec:decomp}

Solving the lean adjoint ODE in \cref{eq:adjoint} requires a vector--Jacobian product with the
velocity Jacobian $\nabla_{X_\tau}v^{\mathrm{base}}(X_\tau,\tau)$ at every flow step, so the cost of
each update grows with both the number of flow steps and the policy size, which makes adjoint matching
difficult to scale as pretrained flow policies grow
larger~\citep{black2025pi05,bjorck2025gr00t}. We resolve this problem with the empirical finding that the batch-averaged velocity Jacobian
$J_\tau:=\nabla_{X_\tau}v^{\mathrm{base}}(X_\tau,\tau)$ of the pretrained policy concentrates on its diagonal (\cref{fig:jacobian}, with all ten domains in
\cref{app:geometry}). Following the isotropic approximation of the posterior covariance
in \citet{peng2024improving}, we approximate $J_\tau=c_\tau I$ with a scalar $c_\tau$, which
solves \cref{eq:adjoint} in closed form.

\begin{proposition}[the lean adjoint ODE under an isotropic velocity Jacobian]
\label{prop:split}
Assume $J_\tau=c_\tau I$ for a scalar $c_\tau$. Then the lean adjoint ODE in \cref{eq:adjoint} admits the closed-form solution
\begin{align*}
    \Aexact_\tau\;=\;-\,\tau\,\exp\!\Big(2\!\int_\tau^1 c_s\,\dd s\Big)\,\nabla_{X_1}Q^\pi(s,X_1),
\end{align*}
which points along the negative of the critic's action gradient at the final action at every flow time
$\tau>0$.
\end{proposition}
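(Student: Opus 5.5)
Under $J_\tau=c_\tau I$ the adjoint ODE becomes a scalar linear ODE acting on each coordinate, so I plan to solve it with an integrating factor and then check it against the terminal condition.

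First, I compute the Jacobian appearing in \cref{eq:adjoint}. Since $\nabla_{X_\tau}\big(\tfrac{1}{\tau}X_\tau\big)=\tfrac{1}{\tau}I$, the assumption gives
\begin{align*}
\nabla_{X_\tau}\Big(2v^{\mathrm{base}}(X_\tau,\tau)-\tfrac{1}{\tau}X_\tau\Big)=\Big(2c_\tau-\tfrac{1}{\tau}\Big)I .
\end{align*}
The ODE therefore reduces to $\frac{\dd}{\dd\tau}\Aexact_\tau=-\big(2c_\tau-\tfrac1\tau\big)\Aexact_\tau$. Because the matrix is a multiple of the identity, the transpose and the row/column convention do not matter, and each coordinate obeys the same scalar equation.

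Next, I integrate backward from $\tau=1$. The fundamental solution is
\begin{align*}
\Aexact_\tau=\exp\Big(\int_\tau^1\big(2c_s-\tfrac1s\big)\dd s\Big)\Aexact_1 .
\end{align*}
This is confirmed by differentiating in $\tau$: the lower limit contributes $-\big(2c_\tau-\tfrac1\tau\big)$ times the solution, which is exactly the ODE. Evaluating $\int_\tau^1 \tfrac1s\,\dd s=-\log\tau$ gives $\exp\big(-\int_\tau^1 \tfrac1s\,\dd s\big)=\tau$. Substituting $\Aexact_1=-\nabla_{X_1}Q^\pi(s,X_1)$ then yields the stated formula.

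For the directional claim, the prefactor $\tau\exp\big(2\int_\tau^1 c_s\,\dd s\big)$ is a positive scalar for every $\tau\in(0,1]$. Hence $\Aexact_\tau$ is a positive multiple of $-\nabla_{X_1}Q^\pi(s,X_1)$.

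The only delicate point is regularity. The $\tfrac1\tau$ term is singular at $\tau=0$, so the statement is restricted to $\tau>0$. I also need $c_s$ to be locally integrable on $(0,1]$ so that the exponential is finite; I will state this assumption explicitly. With that in place, uniqueness of solutions to linear ODEs with locally integrable coefficients on $(0,1]$ ensures that the closed form is the solution.
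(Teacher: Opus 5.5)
Your proposal is correct and follows the same route as the paper: under $J_\tau=c_\tau I$ the Jacobian term collapses to a multiple of the identity, the ODE becomes the scalar linear equation $\frac{\dd}{\dd\tau}\Aexact_\tau=\big(\tfrac{1}{\tau}-2c_\tau\big)\Aexact_\tau$, and integrating backward from the terminal condition gives the prefactor, since $\exp\big(-\int_\tau^1 \tfrac{1}{s}\,\dd s\big)=\tau$. Your verification by differentiation and your remarks on local integrability of $c_s$ and the singularity at $\tau=0$ are sensible refinements that the paper leaves implicit.
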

\begin{proof}
With $J_\tau=c_\tau I$, the Jacobian term in \cref{eq:adjoint} becomes
$\big(\frac{1}{\tau}-2c_\tau\big)I$, so the ODE reads
$\dd\Aexact_\tau=\big(\frac{1}{\tau}-2c_\tau\big)\Aexact_\tau\,\dd\tau$ and every coordinate is
scaled by the same factor. Integrating from $\tau$ to $1$ gives
$\Aexact_\tau=\tau\exp\big(2\int_\tau^1 c_s\,\dd s\big)\Aexact_1$, and the terminal condition is
$\Aexact_1=-\nabla_{X_1}Q^\pi(s,X_1)$.
\end{proof}

At every flow time, the closed-form solution is a scalar multiple of the critic's action
gradient at the final action. In practice, estimating $c_\tau$ would re-introduce the Jacobian we set out
to avoid, so we eliminate the exponential factor and keep only
\begin{align*}
    \Ahat_\tau
    =-\tau\nabla_{X_1}Q^\pi(s,X_1),
    \qquad \forall \tau\in[0,1],
\end{align*}
which we call \emph{scalar adjoint}. The exponential factor $\exp\big(2\int_\tau^1 c_s\,\dd s\big)$ only rescales the closed-form solution, and dropping it is empirically justified by
\cref{fig:kappa} (left), where the norm of $\Ahat_\tau$ follows the same trend as that of
$\Aexact_\tau$ across flow times. The scalar adjoint requires a single critic gradient at the sampled action $X_1$, and no
vector--Jacobian product through the policy at any flow step. 

\begin{figure}[t]
\begin{center}
\includegraphics[width=0.95\linewidth]{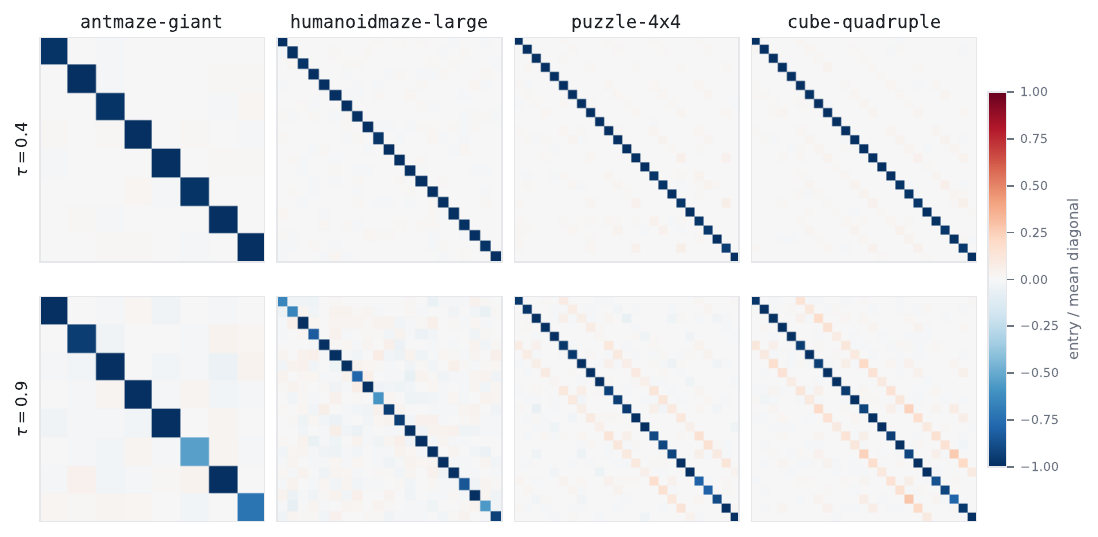}
\end{center}
\caption{Velocity Jacobian of the pretrained flow on four domains at two flow times, averaged over a
batch of states and normalized per panel by the mean diagonal. The batch-averaged Jacobian is
dominated by its diagonal.}
\label{fig:jacobian}
\vspace{-0.15in}
\end{figure}

Moreover, \method{} follows the adjoint matching form of \cref{eq:am}, replacing the exact adjoint
$\Aexact_\tau$ in its target with the scalar adjoint $\Ahat_\tau$, so the path-space trust region of TRQAM~\citep{dong2026trustregionqadjoint} can be used or left out. We use it by
default for stable training, and \cref{app:notrust} reports \method{} without it.
\cref{app:cost} compares the computational cost of the scalar and exact adjoint updates.

\subsection{Regularizing the critic at the policy's own actions}
\label{sec:critic}

Using the scalar adjoint alone does not work well, especially on the manipulation domains
(\cref{sec:gain}). The exact adjoint passes $\nabla_{X_1}Q^\pi(s,X_1)$ through the velocity Jacobian at every flow step,
so it does not use this gradient as directly as the scalar adjoint does (\cref{fig:transport}). The
scalar adjoint update therefore depends directly on the critic's gradient at the policy-generated
action, and we hypothesize that its failure arises from errors in the critic at that action. To address this, we introduce a
value penalty on the critic at the actions the current policy generates, relative to the value at the
dataset action,
\begin{align}
    \mathcal{L}_{\mathrm{critic}} \;\mathrel{+}=\;
    c\cdot\big(\,Q^\pi(s,\operatorname{sg}(a_\pi))
    -Q^\pi(s,a_{\mathrm{data}})\,\big).
    \label{eq:gapreg}
\end{align}
\cref{eq:gapreg} penalizes the critic at the policy-generated action $a_\pi$, which is
also the action whose critic gradient drives the scalar adjoint update, so the penalty controls the
critic exactly where the policy update uses it. We refer to this regularizer as the \emph{value
penalty}. Unlike CQL~\citep{kumar2020conservativeqlearningofflinereinforcement}, which penalizes the
critic in expectation over a sampling distribution drawn inside the critic loss, the value penalty
acts at the single action the actor update has already drawn, so it adds no sampling cost.

Common forms of critic regularization act elsewhere. In-sample
maximization~\citep{kostrikov2021offlinereinforcementlearningimplicit,garg2023extremeqlearningmaxentrl} fits the critic only on dataset actions, so it never evaluates the critic at $a_\pi$. Behavioral
regularization~\citep{tarasov2023revisitingminimalistapproachoffline,fujimoto2021minimalistapproachofflinereinforcement} penalizes the distance between $a_\pi$ and $a_{\mathrm{data}}$, which constrains the actions rather than the value assigned to them. Neither controls the critic at the action whose gradient the scalar adjoint uses, and the value penalty does. Experiments in \cref{sec:gain} support this.

\begin{figure}[t]
\centering
\begin{minipage}[t]{0.495\linewidth}
\begin{algorithm}[H]
\caption{TRQAM \citep{dong2026trustregionqadjoint}}
\label{alg:trqam}
\begin{algorithmic}[1]
\small
\Require $v^{\mathrm{base}}$, $Q^\pi_\phi$, KL budget $\varepsilon_{\mathrm{KL}}$, training steps $N$
\State $v^{\mathrm{ft}}_\theta\leftarrow v^{\mathrm{base}}$
\For{$n=0,\ldots,N-1$}
    \State Sample a trajectory $X_0,\dots,X_1$ by $v^{\mathrm{ft}}_\theta$
    \State $X_\tau\leftarrow$ the states of that trajectory
    \State Solve $\dd\Aexact_\tau=-{\Aexact_\tau}^{\!\top}\nabla_{X_\tau}\big(2v^{\mathrm{base}}-\frac{1}{\tau}X_\tau\big)\dd\tau$
    \State Update $\theta$ by \cref{eq:am}
    \State TD update of $Q^\pi_\phi$
    \State Dual update of $\lambda_n$ for trust region
\EndFor
\end{algorithmic}
\end{algorithm}
\end{minipage}
\hfill
\begin{minipage}[t]{0.495\linewidth}
\begin{algorithm}[H]
\caption{\textcolor{GoogleBlue}{SQAM (ours)}}
\label{alg:sqam}
\begin{algorithmic}[1]
\small
\Require $v^{\mathrm{base}}$, $Q^\pi_\phi$, KL budget $\varepsilon_{\mathrm{KL}}$, training steps $N$
\State $v^{\mathrm{ft}}_\theta\leftarrow v^{\mathrm{base}}$
\For{$n=0,\ldots,N-1$}
    \State Sample the endpoint $X_1$ by $v^{\mathrm{ft}}_\theta$
    \State \textcolor{GoogleBlue}{$X_\tau\leftarrow(1-\tau)\,\epsilon+\tau X_1,\ \epsilon\sim\mathcal N(0,I)$}
    \State \textcolor{GoogleBlue}{$\Ahat_\tau\leftarrow-\tau\,\nabla_{X_1}Q^\pi(s,X_1),\ \forall\tau\in[0,1]$}
    \State Update $\theta$ by \cref{eq:am}
    \State TD update of $Q^\pi_\phi$ \textcolor{GoogleBlue}{${}+{}$\cref{eq:gapreg}}
    \State Dual update of $\lambda_n$ for trust region
\EndFor
\end{algorithmic}
\end{algorithm}
\end{minipage}
\caption{Simplified versions of the TRQAM and \method{} algorithms. The parts \method{} changes
are marked in \textcolor{GoogleBlue}{blue}, and \cref{app:algorithm} gives the full
algorithm.}
\end{figure}

\section{Experiments}
\label{sec:exp}

We evaluate \method{} on off-policy fine-tuning of pretrained flow policies in the offline and
offline-to-online settings, using the 50 OGBench tasks for the main comparison. The experiments are organized around the following five questions.
\begin{itemize}[leftmargin=*, itemsep=2pt, parsep=0pt, topsep=0pt]
\item How does \method{} perform against prior methods in offline and offline-to-online RL
(\cref{sec:main})?
\item How does the scalar adjoint compare with the exact adjoint (\cref{sec:ladder})?
\item Why does the value penalty work better than other value learning schemes
(\cref{sec:gain})?
\item How do the value penalty coefficient $c$ and the KL budget $\varepsilon_{\mathrm{KL}}$ affect
\method{} (\cref{sec:sensitivity})?
\item Does \method{} extend to large pretrained policies (\cref{sec:scale})?
\end{itemize}

\subsection{Main Experiments}
\label{sec:main}

\paragraph{Setup.} OGBench \citep{ogbench_park2025} is a goal-conditioned RL benchmark, and we use its reward-based single-task variants over 10 domains with 5 tasks per domain, 50 tasks in all. Manipulation domains use action-chunked policies of chunk size $5$~\citep{li2025reinforcementlearningactionchunking}. All methods start from the same flow policy, pretrained with behavior cloning for 300K steps, and are fine-tuned offline for 1M steps with online runs continuing to 1.5M steps. We report the average success rate (\%) over 8 seeds, and the detailed hyperparameters are in \cref{app:hyperparams}. We compare against seven off-policy fine-tuning methods for flow policies, FQL \citep{park2025flowqlearning}, CGQL-L~\citep{dhariwal2021diffusion}, DSRL \citep{wagenmaker2025steeringdiffusionpolicylatent}, IFQL, QAM and QAM-E \citep{li2026qlearningadjointmatching}, and TRQAM \citep{dong2026trustregionqadjoint}. The last three belong to the same adjoint matching family as \method{}. See \cref{app:baselines} for the detailed description of each baseline.

\paragraph{Results.} \cref{tab:offline} reports offline success rates at 1M steps. \method{} improves most on the four hardest domains, adding $21$ points over the best baseline on \texttt{antmaze-giant}, $18$
on \texttt{humanoidmaze-large}, $22$ on \texttt{cube-triple}, and $35$ on \texttt{cube-quadruple},
where the strongest baseline reaches $19$ and \method{} reaches $54$. Over all fifty tasks it reaches
$75\%$ against $68\%$ for TRQAM, the strongest baseline. This trend holds through the online phase as well. The per-task breakdown of all fifty tasks is in \cref{tab:full-results}, and their success rate curves are in \cref{app:benchmark}.

\begin{table}[t]
    \centering
    \renewcommand{\arraystretch}{1.2}
    \setlength{\tabcolsep}{3pt}
    \setlength{\abovecaptionskip}{4pt}
    \caption{\textbf{Offline RL on 50 OGBench~\citep{ogbench_park2025} tasks
    at 1M training steps (8 seeds).}
    Mean success rate (\%) $\pm$ one standard deviation.
    Domain abbreviations are listed in \cref{app:domains}, and the per-task breakdown across all 50
    tasks is in \cref{tab:full-results}.}
    \label{tab:offline}
    \newcommand{\gval}[2]{{\color{gray!70}#1}\,{\color{gray!50}\scriptsize$\pm$#2}}
    \newcommand{\bmax}[2]{\underline{#1}\,{\color{gray!50}\scriptsize$\pm$#2}}
    \newcommand{\gavg}[1]{{\color{gray!70}#1}}
    \newcommand{\ourmax}[2]{{\color{GoogleBlue}\textbf{\underline{#1}}}\,{\color{gray!50}\scriptsize$\pm$#2}}
    \newcommand{\ourval}[2]{{\color{GoogleBlue}#1}\,{\color{gray!50}\scriptsize$\pm$#2}}
    \newcommand{\ouravg}[1]{{\color{GoogleBlue}\textbf{\underline{#1}}}}
\resizebox{0.9\linewidth}{!}{
\begin{tabular}{>{\centering\arraybackslash}m{2.2cm} l ccccccccccc}
\toprule
 &  & \texttt{al} & \texttt{ag} & \texttt{hm} & \texttt{hl} & \texttt{scene} & \texttt{p33} & \texttt{p44} & \texttt{c2} & \texttt{c3} & \texttt{c4} & \texttt{all} \\
 &  & \tiny\texttt{5 tasks} & \tiny\texttt{5 tasks} & \tiny\texttt{5 tasks} & \tiny\texttt{5 tasks} & \tiny\texttt{5 tasks} & \tiny\texttt{5 tasks} & \tiny\texttt{5 tasks} & \tiny\texttt{5 tasks} & \tiny\texttt{5 tasks} & \tiny\texttt{5 tasks} & \tiny\texttt{50 tasks} \\
\midrule
\textsc{Backprop} & {\texttt{FQL}}
& \gval{38}{9} & \gval{2}{6} & \gval{74}{5} & \gval{2}{1} & \gval{70}{5} & \gval{25}{10} & \gval{9}{7} & \gval{44}{4} & \gval{7}{5} & \gval{9}{5} & \gavg{28} \\
\midrule
\textsc{Guidance} & {\texttt{CGQL-L}}
& \gval{48}{7} & \gval{7}{5} & \gval{57}{2} & \gval{6}{3} & \gval{58}{1} & \gval{0}{0} & \gval{0}{0} & \gval{55}{2} & \gval{0}{1} & \gval{1}{1} & \gavg{23} \\
\midrule
\textsc{Post Processing} & {\texttt{DSRL}}
& \gval{53}{2} & \gval{1}{1} & \gval{53}{10} & \gval{1}{1} & \bmax{80}{0} & \bmax{100}{0} & \gval{61}{8} & \gval{72}{4} & \gval{34}{6} & \gval{9}{3} & \gavg{46} \\
& {\texttt{IFQL}}
& \gval{29}{8} & \gval{12}{3} & \gval{93}{2} & \gval{30}{7} & \gval{36}{1} & \gval{64}{4} & \gval{42}{4} & \gval{9}{2} & \gval{24}{7} & \gval{6}{3} & \gavg{35} \\
\midrule
\textsc{Adjoint Matching} & {\texttt{QAM}}
& \gval{62}{9} & \gval{29}{4} & \gval{64}{7} & \gval{4}{3} & \gval{64}{4} & \gval{15}{3} & \gval{1}{1} & \gval{71}{2} & \gval{19}{6} & \gval{18}{3} & \gavg{35} \\
& {\texttt{QAM-E}}
& \gval{86}{3} & \gval{6}{8} & \gval{60}{6} & \gval{4}{5} & \gval{63}{6} & \gval{89}{4} & \gval{54}{8} & \gval{71}{3} & \gval{11}{4} & \gval{9}{3} & \gavg{45} \\
& {\texttt{TRQAM}}
& \gval{89}{4} & \gval{41}{4} & \gval{84}{3} & \gval{36}{4} & \gval{79}{1} & \bmax{100}{0} & \bmax{99}{1} & \bmax{81}{3} & \gval{50}{5} & \gval{19}{5} & \gavg{68} \\
\midrule
\rowcolor{ourlightblue}
\textsc{\textbf{Ours}} & \texttt{SQAM}
& \ourmax{91}{3} & \ourmax{62}{4} & \ourmax{94}{2} & \ourmax{54}{5} & \ourmax{79}{0} & \ourmax{100}{0} & \gval{85}{6} & \gval{58}{3} & \ourmax{72}{4} & \ourmax{54}{5} & \ouravg{75} \\
\bottomrule
\end{tabular}
}
\end{table}

\subsection{Comparison between Adjoints}
\label{sec:ladder}
\cref{fig:kappa} explains why the scalar adjoint works even though it approximates the exact
adjoint. The approximation preserves the two properties the update relies on. In magnitude, both
adjoints contract in the same way as the flow time approaches zero, so the regression target in
\cref{eq:am} has a similar magnitude at each flow step under either adjoint. In direction, the scalar
adjoint stays positively aligned with the exact adjoint at every flow time. These two properties allow
the scalar adjoint to stand in for the exact adjoint, but they do not make the two identical. The angle
between them is larger on the two manipulation domains than on the two locomotion domains, and these
are exactly the domains where the scalar adjoint alone underperforms (\cref{sec:critic}). Since
the scalar adjoint relies directly on the critic's gradient, this correspondence led us to examine the
role of the value learning scheme in this framework, which \cref{sec:gain} presents. 

\subsection{Analyses over Value Learning Schemes}
\label{sec:gain}
To justify the choice of the value penalty, we compare it with three alternative value learning schemes. Two are widely used forms of critic regularization, and the third directly regularizes the critic's action gradient that the scalar adjoint uses. The first is an in-sample critic in the style of IQL~\citep{kostrikov2021offlinereinforcementlearningimplicit}, which fits a state value by expectile regression on dataset actions and trains the critic by TD regression toward it. The second is ReBRAC~\citep{tarasov2023revisitingminimalistapproachoffline}'s bootstrap penalty, which subtracts
$\lVert a_\pi-a_{\mathrm{data}}\rVert^2$ from the bootstrap target. The third penalizes the norm of the critic's action gradient, adding $\big\|\nabla_a Q^\pi(s,\operatorname{sg}(a_\pi))\big\|^2$ to the critic loss. See \cref{app:consforms} for the detailed description of each scheme. 

\cref{fig:conservatism} compares the schemes on \texttt{cube-triple} and \texttt{cube-quadruple}, the two manipulation domains where the scalar adjoint alone underperforms, and \cref{app:schemesfull} reports every scheme swept over its coefficient on all ten domains. The value penalty works best on both, while the in-sample critic and ReBRAC stay close to the scalar adjoint without regularization. This follows from where each scheme acts. The scalar adjoint uses the critic's gradient at the policy-generated action $a_\pi$ directly at every flow step, so the policy update depends directly on the critic at $a_\pi$. On the other hand, the in-sample critic is trained only on dataset actions, so it never evaluates the critic at $a_\pi$. ReBRAC constrains the distance between $a_\pi$ and $a_{\mathrm{data}}$, which controls the action itself rather than the value the critic assigns to it. Only the value penalty acts on the critic's value at $a_\pi$.

The gradient-norm penalty also improves over the unregularized critic, most clearly on \texttt{cube-quadruple}, which is consistent with its close relation to the value penalty. A first-order expansion of the value penalty in \cref{eq:gapreg} around $a_\pi$ gives
$$Q^\pi(s,a_\pi)-Q^\pi(s,a_{\mathrm{data}})\approx
\nabla_{a_\pi}Q^\pi(s,a_\pi)^{\!\top}(a_\pi-a_{\mathrm{data}}),$$
so the two penalties are linked through the critic's gradient at $a_\pi$.

The explanation above rests on the scalar adjoint using the critic's gradient at $a_\pi$ directly.
The exact adjoint instead passes this gradient through the velocity Jacobian at every flow step, so if
the explanation holds, the value penalty should help the exact adjoint less. To test this, we also
apply the value penalty to TRQAM, which uses the exact adjoint. The value penalty improves TRQAM as well, but far less than it improves \method{} (\cref{app:chainpen}).

\begin{figure}[t]
\centering
\includegraphics[width=0.93\linewidth]{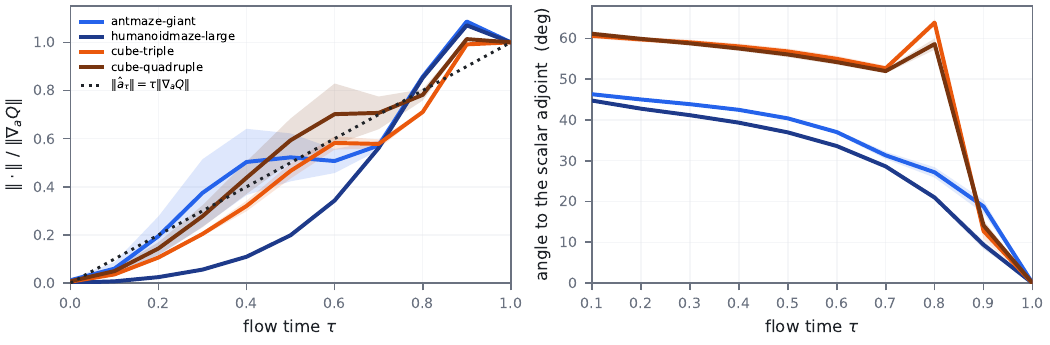}
\caption{Exact and scalar adjoints computed on the same rollout states. \textbf{Left:} the norm of
each adjoint relative to $\|\nabla_aQ\|$, where the scalar adjoint is the dotted line $\tau$.
\textbf{Right:} the angle between the two adjoints. The two adjoints contract similarly in norm and stay positively aligned. The angle is larger on the manipulation domains, where the scalar adjoint alone underperforms. Three seeds per domain at 0.3M steps, without the value penalty.}
\label{fig:kappa}\label{fig:norm}
\vspace{-0.15in}
\end{figure}

\begin{figure}[t]
\begin{center}
\includegraphics[width=0.93\linewidth]{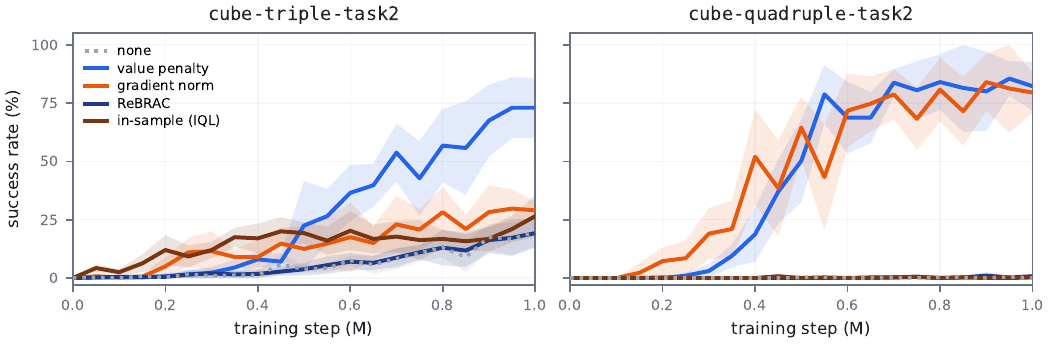}
\end{center}
\caption{Value learning schemes compared on the two domains where the scalar adjoint alone
fails. Each scheme is shown at its best coefficient on the grid we swept, and the full sweeps on all ten
domains are in \cref{app:schemesfull}. The value penalty lifts both domains and the gradient-norm penalty follows it on \texttt{cube-quadruple}, while ReBRAC and the in-sample critic stay close to the scalar adjoint without regularization. Eight seeds per curve, and bands are one standard deviation.}
\label{fig:conservatism}
\vspace{-0.15in}
\end{figure}

\subsection{Ablation Studies}
\label{sec:sensitivity}

\vspace{0.01in}
\paragraph{Effect of value penalty $c$.}

\cref{fig:coefcurves} assesses the value penalty coefficient $c$ on one task from each of four
domains. On the two cube manipulation tasks, the success rate rises sharply as $c$ grows, while on
the two locomotion tasks the gains are small. These results indicate that the value penalty matters
mainly on the domains where the scalar adjoint alone underperforms. We use $c=0.3$ on every domain
except \texttt{cube-double}, where the penalty is off because a large $c$ lowers its offline
performance (\cref{app:sensfull}).

\begin{figure}[t]
\begin{center}
\includegraphics[width=0.96\linewidth]{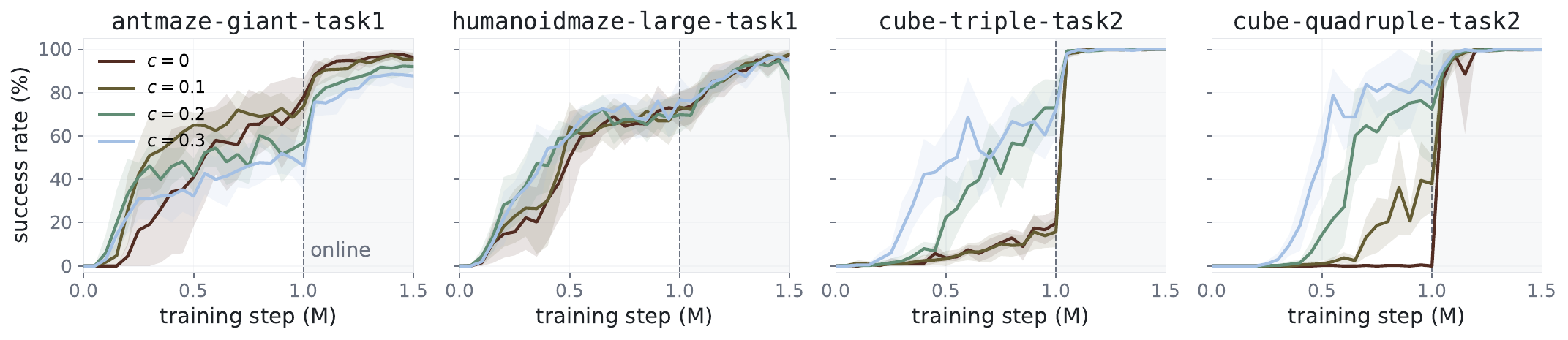}
\end{center}
\vspace{-0.1in}
\caption{Sweep over value penalty $c$ on four of the ten swept domains. Eight seeds per curve,
and bands are one standard deviation. The full analysis on all ten domains is in \cref{app:sensfull}.}
\label{fig:coefcurves}
\vspace{-0.1in}
\end{figure}

\begin{figure}[t]
\begin{center}
\includegraphics[width=0.96\linewidth]{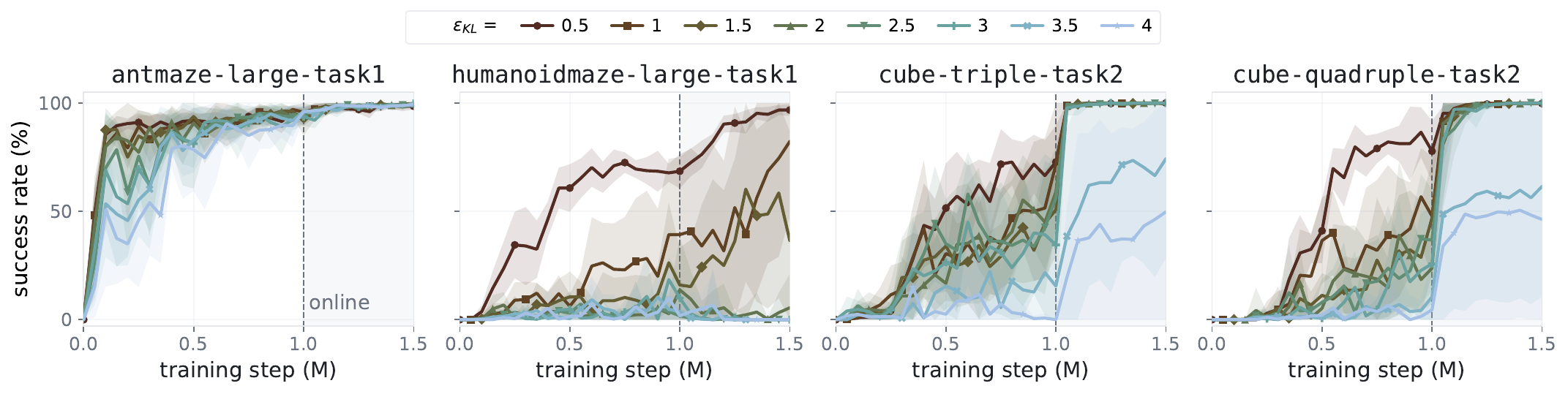}
\end{center}
\vspace{-0.1in}
\caption{Sweep over KL budget $\epsilon_{\text{KL}}$ on four of the ten swept domains. Eight seeds per
curve, and bands are one standard deviation. The full analysis on all ten domains is in
\cref{app:sensfull}.}
\label{fig:klsweep}
\vspace{-0.17in}
\end{figure}

\vspace{0.01in}
\paragraph{Effect of KL budget $\epsilon_{\text{KL}}$.}

The budget directly sets how far the fine-tuned policy may deviate from the pretrained one~\citep{dong2026trustregionqadjoint}. We sweep it on all ten domains (\cref{app:sensfull}), and \cref{fig:klsweep} shows four of them. On most domains, performance stays flat or decreases steadily as the budget widens, so the budget affects performance predictably and a small budget is a reliable default. The exception is
\texttt{puzzle-4x4}, which prefers a larger budget, consistent with its larger state space.

\subsection{Extension to Large Pretrained Policies}
\label{sec:scale}
To test whether \method{} extends to large pretrained policies, we fine-tune
RLDX-1~\citep{kim2026rldx1technicalreport}, a pretrained vision-language-action policy, on a real-world
bimanual robot~\citep{rainbow_rby1} equipped with two $20$-DoF dexterous hands~\citep{wuji_hand2},
with an action chunk of $16$ steps.
\cref{fig:task_vis} shows the three manipulation tasks. Starting from the pretrained policy, we
run supervised fine-tuning on task demonstrations, then \method{} for 15K offline steps and a further
10K online steps. We compare against
EXPO-FT~\citep{dong2026expoftsampleefficientreinforcementlearning}, a residual fine-tuning method
that freezes the pretrained policy and learns an edit policy that adds corrections to its actions
(\cref{app:baselines}), under the shared protocol of \cref{app:vlasetup}.
\cref{tab:vla} reports held-out success over $30$ episodes per task, and
\cref{fig:vlacurves} in \cref{app:rby1:results} tracks the training rollouts
through the online phase.

\method{} improves over SFT on all three tasks, whereas EXPO-FT does not appear to work well in
this setting. Whereas \method{} updates the flow policy directly, EXPO-FT trains a
residual edit policy, which must be learned from scratch and cannot directly exploit the
expressiveness of the pretrained flow policy. We conjecture that these two differences account for
this.

\begin{figure}[t]
\begin{center}
\includegraphics[width=\linewidth]{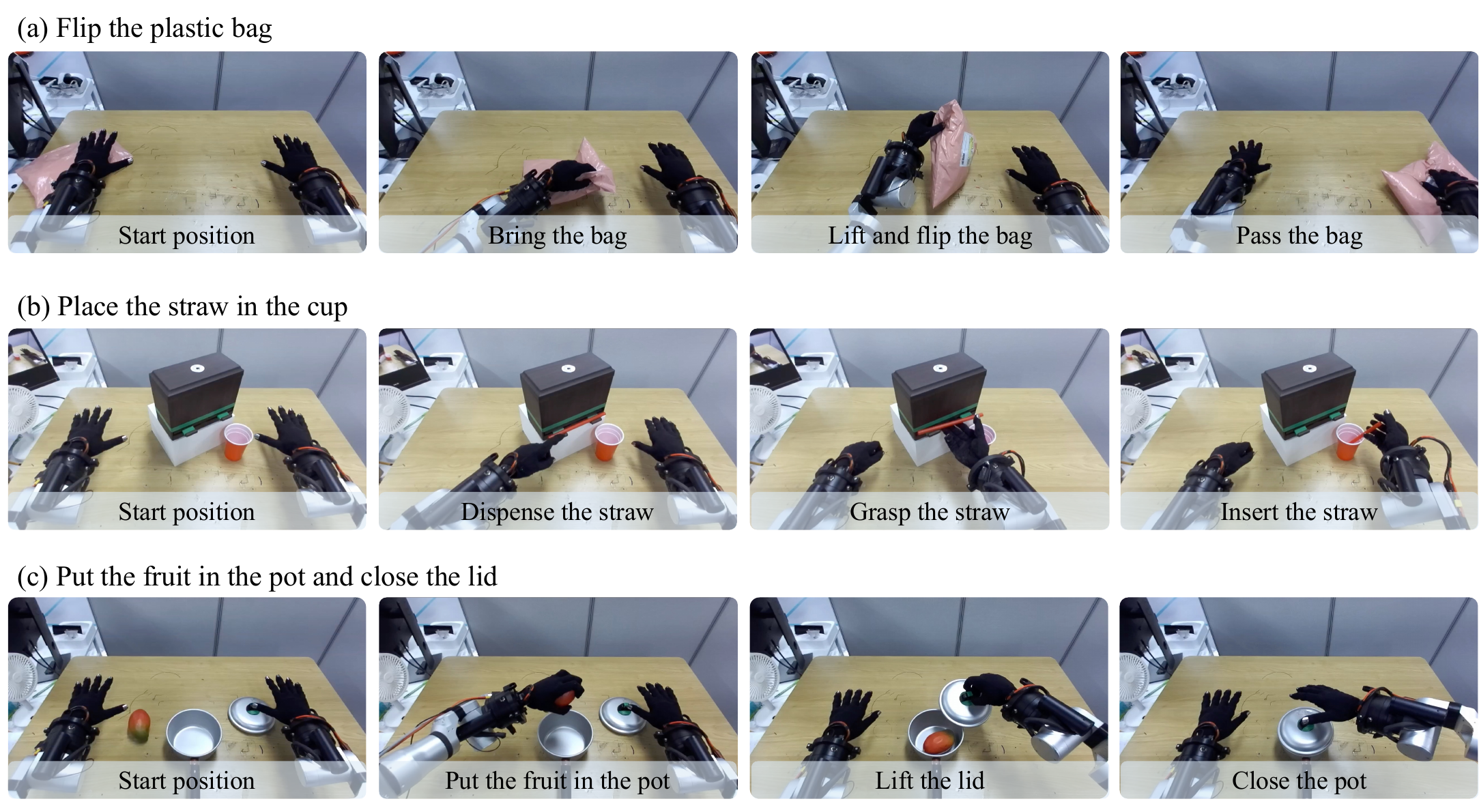}
\end{center}
\vspace{-0.1in}
\caption{Overview of the three real-world manipulation tasks. From top to bottom: flipping a plastic bag, placing a straw in a cup, and placing fruit in a pot and closing the lid. Each row illustrates the task progression from left to right.}
\label{fig:task_vis}

\end{figure}

\begin{table}[t] 
\centering
\caption{Success rate on three real-robot manipulation tasks with a vision-language-action policy. The
offline rows evaluate the checkpoint after 15K offline steps, and the offline-to-online rows the checkpoint
after a further 10K online steps. Each cell counts successes over $30$ held-out episodes.}
\label{tab:vla}
\begin{tabular}{lccc}
\toprule
Method & \makecell{Flip plastic bag} & \makecell{Place the straw} & \makecell{Put the fruit and close the lid} \\
\midrule
SFT & $15/30$ & $7/30$ & $4/30$ \\
EXPO-FT (offline) & $8/30$ & $0/30$ & $5/30$ \\
EXPO-FT (offline2online) & $15/30$ & $0/30$ & $3/30$ \\
\method{} (offline) & $20/30$ & $13/30$ & $8/30$ \\
\method{} (offline2online) & $\mathbf{25/30}$ & $\mathbf{15/30}$ & $\mathbf{13/30}$ \\
\bottomrule
\end{tabular}
\vspace{-0.15in}
\end{table}

\section{Related work}
\paragraph{Propagating value to intermediate flow steps.} A flow policy generates its action over many flow
steps, while the critic evaluates only the final action. Using the critic to steer the flow therefore requires propagating its value to each intermediate step, and existing methods differ mainly in how they do
so~\citep{lee2026rewardscorematchingunifying}. The most common approach evaluates the critic at a
Tweedie posterior estimate of the final action, and it is used for guidance at test
time~\citep{chung2024diffusionposteriorsamplinggeneral,kim2025flowdpsflowdrivenposteriorsampling,zhou2026testtimegradientguidanceflow}.
Another line avoids the approximation and feeds the intermediate action to the critic directly, either
by matching the policy's score to the critic's action gradient at the intermediate
action~\citep{psenka2025learningdiffusionmodelpolicy} or by learning a separate value at each noise
level~\citep{oberai2026reversalqlearning,doo2026qflowstableexpressivereinforcement}. Adjoint
matching~\citep{domingoenrich2025adjointmatchingfinetuningflow,li2026qlearningadjointmatching,dong2026trustregionqadjoint,bergmeister2026reinforceadjointmatching} gives a theoretically
grounded answer, propagating the value gradient back to each flow step through the lean adjoint ODE.
For test-time guidance rather than fine-tuning, dropping the Jacobian has also been found to outperform
backpropagating through the exact Jacobian~\citep{zhou2026testtimegradientguidanceflow}.

\paragraph{Critic regularization.} Many value learning schemes regularize the critic, and they differ
in where the regularization acts. CQL penalizes the critic's value at the actions the policy prefers relative to the data~\citep{kumar2020conservativeqlearningofflinereinforcement}, IQL fits a state value by expectile regression and trains the critic by TD regression toward it, using only dataset actions~\citep{kostrikov2021offlinereinforcementlearningimplicit}, and ReBRAC subtracts an action-distance penalty from the bootstrap target~\citep{tarasov2023revisitingminimalistapproachoffline}. Fisher-BRC penalizes the action
gradient of the critic's offset from a behavior model~\citep{kostrikov2021offlinereinforcementlearningfisher}, which is close to the gradient-norm penalty we compare in \cref{sec:gain}, and Cal-QL calibrates the critic against a reference policy for offline-to-online fine-tuning~\citep{nakamoto2024calqlcalibratedofflinerl}.

\paragraph{Offline-to-online RL.} Pretraining a policy and a critic on offline data and then fine-tuning
them online is a standard recipe for sample-efficient
RL~\citep{ball2023efficientonlinereinforcementlearning,hester2017deepqlearningdemonstrations,hu2024imitationbootstrappedreinforcementlearning,kostrikov2021offlinereinforcementlearningimplicit,kumar2020conservativeqlearningofflinereinforcement,lei2024unio4unifyingonlineoffline,luo2025serlsoftwaresuitesampleefficient,nair2020awac,nakamoto2024calqlcalibratedofflinerl,rajeswaran2018learningcomplexdexterousmanipulation,vecerik2018leveragingdemonstrationsdeepreinforcement}.
A central challenge is the distribution shift at the transition, which destabilizes the value
function and can make the policy forget what it learned
offline~\citep{ball2023efficientonlinereinforcementlearning,nakamoto2024calqlcalibratedofflinerl,wolczyk2024fine}.

\section{Conclusion}
We introduced \method{}, which fine-tunes flow policies with off-policy RL without a vector--Jacobian
product through the policy. Motivated by the diagonal structure of the batch-averaged velocity
Jacobian, it replaces the exact adjoint with a closed-form scalar adjoint and pairs it with a value
penalty at the policy-generated actions. \method{} improves most on the four hardest OGBench domains
and extends to a vision-language-action policy on a real robot.

Our results also suggest a broader point. A rough approximation of the adjoint suffices, while the
value learning scheme largely determines how well it performs. The bottleneck of
off-policy RL with flow policies may therefore lie less in the policy update than in the critic. We
hope this encourages further work on value learning for flow policies.
\subsubsection*{Reproducibility statement}
\cref{app:algorithm} gives the complete pseudocode of \method{}, and \cref{sec:decomp}
states and proves the closed form it builds on. \cref{app:expdetails} lists the domains,
datasets, and hyperparameters, and \cref{app:baselines} describes every baseline. The OGBench
experiments use the public benchmark~\citep{ogbench_park2025} and the datasets described in
\cref{app:expdetails}. \cref{app:vlasetup} documents the platform, observation and
action spaces, training details, and evaluation protocol of the real-robot experiments. We will release
the implementation of \method{}.

\subsubsection*{Ethics statement}
This work fine-tunes robot policies with reinforcement learning in simulation and on a robot in a
controlled laboratory setting. The real-robot experiments involve no human subjects and no personal
data, and an operator supervised the robot throughout. As with any method that improves a policy
against a learned value function, deployment beyond a supervised setting calls for the usual
safeguards.

\subsubsection*{AI use statement}
We used generative AI tools to assist with writing, editing, and routine coding and plotting. We did
not use them to generate research ideas, experimental results, or claims. The authors reviewed all
AI-assisted text and code and take full responsibility for the content of this paper.

\bibliographystyle{plainnat}
\bibliography{reference}

\clearpage
\appendix
\crefalias{section}{appendix}
\crefalias{subsection}{subappendix}
\crefalias{subsubsection}{subsubappendix}
\section*{Contents}
\startcontents[appendices]
\printcontents[appendices]{}{1}{\setcounter{tocdepth}{2}}

\newpage

\section{Full algorithm}
\label{app:algorithm}

\cref{alg_full:sqam} presents the complete pseudocode of \method{}, including the trust region it inherits from TRQAM~\citep{dong2026trustregionqadjoint}. \textcolor{GoogleBlue}{Blue} marks what \method{} changes from TRQAM.

The scalar adjoint also changes which states the policy loss is evaluated on. The exact adjoint
integrates the lean adjoint ODE in \cref{eq:adjoint} backward along a sampled trajectory,
\begin{align*}
    \dd \Aexact_\tau= -{\Aexact_\tau}^{\!\top}\nabla_{X_\tau}\Big(2v^{\mathrm{base}}(X_\tau,\tau)-\frac{1}{\tau}X_\tau\Big) \dd \tau,
    \qquad \Aexact_1=-\nabla_{X_1}Q^\pi(s,X_1),
\end{align*}
so the adjoint at each flow time depends on the later ones, and the loss states must be the states of
that trajectory. The scalar adjoint removes this coupling, so the loss states can be built from the
final action alone. Following \citet{bergmeister2026reinforceadjointmatching}, we sample the final
action with the ODE sampler instead of a long and numerically unstable stochastic rollout, and re-noise
it along the linear interpolation path, $X_\tau=(1-\tau)\,\epsilon+\tau\,a_\pi$ with fresh
$\epsilon\sim\mathcal N(0,I)$. We call these the bridge states, which match the rollout-state distribution at the optimum of the
stochastic optimal control problem and approximate it elsewhere~\citep{bergmeister2026reinforceadjointmatching}. In addition, the trust region is optional, and we use
it by default for stable training (\cref{app:notrust}).

\begin{algorithm}[H]
\caption{Q-learning with Scalar Adjoint Matching (\method{})}
\label{alg_full:sqam}
\begin{algorithmic}
\State \textbf{Input:} replay buffer $\mathcal{D}$;
$v^{\mathrm{base}}$: pretrained velocity field;
$v^{\mathrm{ft}}_\theta$: fine-tuned velocity field;
$Q_\phi$: critic function;
step size $h$;
KL budget $\varepsilon_{\mathrm{KL}}$;
dual step size $\eta_\lambda$;
EMA coefficient $\rho$;
\textcolor{GoogleBlue}{value penalty coefficient $c$};
fine-tuning iterations $N$.
\State \textbf{Initialize:}
$v^{\mathrm{ft}}_\theta \gets v^{\mathrm{base}}$ with parameters $\theta$;
$\lambda_0 > 0$;
$\overline{D}_0 \gets 0$.
\State \textbf{Noise schedule:}
$\sigma_n(\tau) := g(\tau)/\sqrt{\lambda_n}$, where $g(\tau) := \sqrt{2(1-\tau)/\tau}$
\vspace{2pt}
\For{$n \in \{0, \dots, N-1\}$}
    \State Sample a batch $\mathcal{B} = \{(s_i, a_i, r_i, s'_i)\}$ from $\mathcal{D}$
    \vspace{2pt}
    \State \textbf{Rollout:}
    For each state $s \in \mathcal{B}$, sample the endpoint $X_1$ \textcolor{GoogleBlue}{by the flow
    itself}:
    \begin{equation*}
        \textcolor{GoogleBlue}{X_{\tau + h}
        = X_\tau
        + h\, v^{\mathrm{ft}}_\theta(s, X_\tau, \tau),
        \quad
        X_0 \sim \mathcal{N}(0, I),}
    \end{equation*}
    and set $a_\pi \gets \mathtt{stopgrad}(X_1)$.
    \State \textcolor{GoogleBlue}{\textbf{Bridge states:}
    $X_\tau \gets (1-\tau)\,\epsilon + \tau\,a_\pi$ with fresh $\epsilon \sim \mathcal{N}(0, I)$,
    for all $\tau \in \{h, 2h, \dots, 1\}$}
    \State \textcolor{GoogleBlue}{\textbf{Scalar adjoint:}
    $\Ahat_\tau \gets -\tau\,\nabla_{a} Q_\phi(s, a_\pi)$ for all $\tau \in \{h, 2h, \dots, 1\}$}
    \State Stop gradient: $X_\tau \gets \mathtt{stopgrad}(X_\tau)$,
    $\Ahat_\tau \gets \mathtt{stopgrad}(\Ahat_\tau)$.
    \vspace{2pt}
    \State \textbf{Critic update:}
    Optimize $\phi$ w.r.t.
    \begin{equation*}
        \mathcal{L}(\phi)
        = \frac{1}{|\mathcal{B}|} \sum_{(s,a,r,s') \in \mathcal{B}}
          \Big[
            \big(Q_\phi(s, a) - r - \gamma\, Q_{\bar{\phi}}(s',\, a' \sim \pi_\theta(\cdot \mid s'))\big)^2
            \;+\; \textcolor{GoogleBlue}{c\,\big(Q_\phi(s, a_\pi) - Q_\phi(s, a)\big)}
          \Big]
    \end{equation*}
    \vspace{2pt}
    \State \textbf{Policy update:}
    Optimize $\theta$ w.r.t. the adjoint matching objective:
    \begin{equation*}
        \mathcal{L}_{\mathrm{Adj\text{-}Match}}(\theta)
        = \frac{1}{|\mathcal{B}|}\!\sum_{s \in \mathcal{B}}\sum_{\tau}
          \left\|
              \tfrac{2}{\sigma_n(\tau)}
              \big(v^{\mathrm{ft}}_\theta(s, X_\tau, \tau)
                   - v^{\mathrm{base}}(s, X_\tau, \tau)\big)
              + \sigma_n(\tau)\,\textcolor{GoogleBlue}{\Ahat_\tau}
          \right\|^2
    \end{equation*}
    \vspace{2pt}
    \State \textbf{Trust region update:}
    \State Estimate path-space KL surrogate:
    \begin{equation*}
        \widehat{D}_n
        = \frac{1}{|\mathcal{B}|}\!\sum_{s \in \mathcal{B}}\sum_{\tau}
          \tfrac{2h}{g(\tau)^2}
          \big\|v^{\mathrm{ft}}_\theta(s, X_\tau, \tau) - v^{\mathrm{base}}(s, X_\tau, \tau)\big\|^2
    \end{equation*}
    \State EMA smoothing:
    $\overline{D}_n \gets (1-\rho)\,\overline{D}_{n-1} + \rho\,\widehat{D}_n$.
    \State Dual descent:
    $\lambda_{n+1} \gets \max\!\big\{0,\ \lambda_n + \eta_\lambda(\overline{D}_n - \varepsilon_{\mathrm{KL}})\big\}$.
\EndFor
\State \textbf{Output:} fine-tuned velocity field $v^{\mathrm{ft}}_\theta$, critic $Q_\phi$.
\end{algorithmic}
\end{algorithm}

\section{Baselines}
\label{app:baselines}

On OGBench, we compare against seven off-policy fine-tuning methods for flow policies, spanning the
main fine-tuning paradigms. On the real robot, we compare against EXPO-FT (\cref{sec:scale}). All of them read transitions $(s,a,r,s')$ uniformly from a replay buffer
$\mathcal{D}$, which holds the offline dataset during offline training and grows with the collected
rollouts online.

\paragraph{FQL~\citep{park2025flowqlearning}.} FQL avoids differentiating through the sampling chain by
distilling the multi-step flow into a one-step policy. The behavior-cloning rollout follows the ODE
\begin{align*}
    \dd X_\tau= v_\theta(X_\tau, \tau; s)\,\dd\tau, \quad X_0 \sim \Normal(0, I), \quad \tau \in [0, 1],
\end{align*}
and we denote its terminal value by $\mathrm{ODE}(v_\theta, s, X_0) := X_1$. A one-step policy
$\pi_\omega(s, X_0)$ trains jointly with $v_\theta$, maximizing the critic while an $\ell_2$ term holds
it near the rollout,
\begin{align*}
    \mathcal L_{\mathrm{onestep}}(\omega) = \E_{X_0 \sim \Normal}\!\Big[
    \underbrace{-\, Q(s, \pi_\omega(s, X_0))}_{\text{RL maximization}}
    \;+\;
    \underbrace{\alpha\, \|\pi_\omega(s, X_0) - \mathrm{ODE}(v_\theta, s, X_0)\|_2^2}_{\text{BC distillation}}
    \Big].
\end{align*}
The coefficient $\alpha$ sets how closely $\pi_\omega$ follows the distillation target, and
$\pi_\omega$ is the policy that acts in the environment.

\paragraph{CGQL-L~\citep{dhariwal2021diffusion}.} CGQL-Linex is the guidance baseline introduced in~\citet{li2026qlearningadjointmatching}. It combines the BC
velocity field with classifier-free guidance~\citep{dhariwal2021diffusion} from a Q-function. An
auxiliary critic $Q_\psi(s, X_\tau, \tau)$ is trained on intermediate noisy actions and induces the
guidance velocity
\begin{align*}
    \hat{v}_\psi(X_\tau, \tau; s) := \frac{(1 - \tau)\beta\,\nabla_{X_\tau} Q_\psi(s, X_\tau, \tau) + X_\tau}{\tau},
\end{align*}
where $X_\tau = (1-\tau) X_0 + \tau X_1$ with $X_0 \sim \Normal(0, I)$. Sampling runs the combined
velocity $v = v^{\mathrm{base}} + w\, \hat{v}_\psi$ with guidance weight $w$, steering toward the
entropy-regularized target $\pi^\star(\cdot \mid s) \propto e^{\beta Q_\phi(s, \cdot)}$. The auxiliary
critic trains with a Linex regression~\citep{parsian2002estimation,myers2025offline},
\begin{align*}
    \mathcal L_{\mathrm{Linex}}(\psi) = \E_{\tau, X_0}\!\left[
    \exp\bigl(\beta(Q_\phi(s, X_1) - Q_\psi(s, X_\tau, \tau))\bigr) + \beta Q_\psi(s, X_\tau, \tau)\right],
\end{align*}
and the standard critic $Q_\phi$ trains by TD with target actions sampled from the combined velocity,
\begin{align*}
    \mathcal L_{\mathrm{TD}}(\phi) = \E\!\left[
    \bigl(Q_\phi(s, a) - r - \gamma\, Q_{\bar\phi}(s', \mathrm{ODE}(v, s', X_0))\bigr)^2
    \right].
\end{align*}
Following~\citet{li2026qlearningadjointmatching}, the Linex loss carries a Huber-style stabilization against exponential blow-up.

\paragraph{DSRL~\citep{wagenmaker2025steeringdiffusionpolicylatent}.} DSRL leaves the flow frozen and runs RL in its noise
space. A one-step Gaussian noise-space policy $\pi_\omega(X_0 \mid s)$ trains with SAC against a
noise-space critic $Q_\psi(s, X_0)$ that regresses to the action-space critic,
\begin{align*}
    \mathcal L(\psi) = \E_{X_0 \sim \Normal}\!\left[
    \bigl(Q_\psi(s, X_0) - Q_\phi(s, \mathrm{ODE}(v_{\bar\theta}, s, X_0))\bigr)^2
    \right].
\end{align*}
At inference the noise is drawn from the noise-space policy, $X_0\sim\pi_\omega(\cdot\mid s)$, and
pushed through the flow, $a = \mathrm{ODE}(v_\theta, s, X_0)$.
Following~\citet{li2026qlearningadjointmatching}, our DSRL also fine-tunes the BC velocity online, with a target network
$v_{\bar\theta}$ for stability, which strengthens its offline-to-online performance.

\paragraph{IFQL.} IFQL is the flow counterpart of implicit diffusion
Q-learning~\citep{hansen2023idql}, taken as a baseline in~\citet{park2025flowqlearning}. Value learning is
IQL-style expectile regression~\citep{kostrikov2021offlinereinforcementlearningimplicit}. A value network $V_\xi$ fits an upper
expectile of the critic, and the critic bootstraps through it,
\begin{align*}
    \mathcal L_V(\xi) &= \E\!\left[L_2^\kappa\bigl(Q_{\bar\phi}(s, a) - V_\xi(s)\bigr)\right], \\
    \mathcal L_Q(\phi) &= \E\!\left[\bigl(r + \gamma V_\xi(s') - Q_\phi(s, a)\bigr)^2\right],
\end{align*}
where $L_2^\kappa(u) = |\kappa - \mathbf{1}(u < 0)|\, u^2$ is the expectile loss with
$\kappa \in (0.5, 1)$. The policy draws $N$ candidate actions from the BC flow,
$a^{(i)} = \mathrm{ODE}(v_\theta, s, X_0^{(i)})$, and acts with the one the critic values highest.

\paragraph{QAM and QAM-E~\citep{li2026qlearningadjointmatching}.} Adjoint matching casts fine-tuning as stochastic optimal
control, which adds a control $u$ to the drift $b$ of the sampler,
\begin{align}
    \dd X_\tau^u = \big(b(X_\tau^u,\tau)+\sigma(\tau)u(X_\tau^u,\tau)\big)\dd\tau
        +\sqrt{\lambda}\,\sigma(\tau)\,\dd B_\tau ,
    \label{equ:controlled_sde}
\end{align}
with $B_\tau$ a Brownian motion and $\lambda$ a scale on the diffusion, and the base sampler
is \cref{equ:controlled_sde} at $u=0$. QAM solves this control problem at a constant $\lambda$, with
an inverse temperature $\beta$ on the terminal reward $\beta\, Q_\phi(s, X_1)$. The fine-tuned velocity trains against the base velocity through the
lean adjoint matching loss
\begin{align*}
    \mathcal L_{\mathrm{AM}}(\theta) = \E\!\left[\int_0^1 \left\|\frac{2\bigl(v^{\mathrm{ft}}_\theta(X_\tau, \tau; s) - v^{\mathrm{base}}(X_\tau, \tau; s)\bigr)}{\sigma(\tau)} + \sigma(\tau)\, \Aexact_\tau\right\|_2^2 \dd\tau\right],
\end{align*}
where $\Aexact_\tau$ is the lean adjoint with terminal condition
$\Aexact_1 = -\beta\, \nabla_{X_1} Q_\phi(s, X_1)$ and $X_\tau$ follows the sampling
dynamics of \cref{equ:controlled_sde} at that constant $\lambda$. Element-wise gradient clipping keeps
the update numerically stable. QAM-E adds a residual edit policy $\pi_\omega(\Delta a \mid s, \tilde a)$
on top, perturbing the QAM-generated action $\tilde a$ by at most $\sigma_a$ in $L_\infty$ distance
through a tanh-squashed Gaussian, trained with entropy-regularized SAC and automatic entropy tuning.

\paragraph{TRQAM~\citep{dong2026trustregionqadjoint}.} TRQAM is the trust-region member of the same family and the
strongest baseline. Where QAM holds the temperature fixed, TRQAM reads $\lambda$
in \cref{equ:controlled_sde} as a trust-region parameter on the sampling dynamics and adapts it over
training. Girsanov's theorem gives the path-space KL between the controlled and the base dynamics in
closed form,
\begin{align}
    D_{\mathrm{KL}}\big(\mathbb P^{u}\,\|\,\mathbb P^{\mathrm{base}}\big)
    \;=\;\E_{\mathbf X\sim\mathbb P^u}\bigg[\int_0^1
    \frac{2}{\lambda\,\sigma(\tau)^2}\,\big\|v^{\mathrm{ft}}_\theta(X^u_\tau,\tau)-v^{\mathrm{base}}(X^u_\tau,\tau)\big\|^2\,\dd\tau\bigg] ,
    \label{eq:pathkl}
\end{align}
so the deviation can be measured on every rollout at no additional cost.
Each update estimates the path-space KL of \cref{eq:pathkl} on its own rollout, smooths the estimate with an EMA
$\overline D_n$, and takes the projected dual step
\begin{align*}
    \lambda_{n+1}=\max\{0,\;\lambda_n+\eta_\lambda(\overline D_n-\varepsilon_{\mathrm{KL}})\}
\end{align*}
with step size $\eta_\lambda$. The constraint therefore lives inside the sampling dynamics, held at the
prescribed budget $\varepsilon_{\mathrm{KL}}$, rather than as a loss-level penalty that a strong critic
signal can override.

\paragraph{EXPO and EXPO-FT~\citep{dong2026expostablereinforcementlearning,dong2026expoftsampleefficientreinforcementlearning}.}
EXPO trains the expressive base policy $\pi_{\mathrm{base}}$ with a stable imitation objective and
never maximizes the value through it. A light-weight Gaussian edit policy $\pi_\omega(\Delta\mid s,a)$
instead adds a residual to the base actions, $\tilde a = a + \beta\Delta$ with edit scale $\beta$, and
is trained with an entropy-regularized objective to raise $Q_\phi(s,\tilde a)$. Actions are chosen by an
on-the-fly policy that draws $N$ base actions $a_{1:N}\sim\pi_{\mathrm{base}}(\cdot\mid s)$, edits each
of them, and takes the candidate with the highest value,
\begin{align*}
    a^\star = \argmax_{a\in\{a_i,\tilde a_i\}_{i=1}^N} Q_\phi(s,a),
\end{align*}
both for acting and for the bootstrap action in the TD target. EXPO-FT applies this recipe to a
pretrained vision-language-action policy~\citep{dong2026expoftsampleefficientreinforcementlearning}. We use EXPO-FT as the baseline
of \cref{sec:scale}, with the settings of \cref{app:rby1:training}.

\section{Value learning schemes}
\label{app:consforms}

All schemes compared in \cref{fig:conservatism} run inside the same \method{} implementation
and differ only in how the critic is trained. The default critic loss is the temporal-difference loss
\begin{align}
    \mathcal{L}_{\mathrm{TD}}(\phi)
    = \E_{(s,a,r,s')\sim\mathcal{D}}\!\left[
      \big(Q^\pi_\phi(s, a) - r - \gamma\, Q^\pi_{\bar\phi}(s', a')\big)^2
    \right],
    \qquad a' \sim \pi_\theta(\cdot \mid s'),
    \label{eq:app_td}
\end{align}
where $a_\pi$ denotes the final action the actor update has already drawn from $\pi_\theta(\cdot\mid s)$.
The value penalty and the gradient-norm penalty add a term to \cref{eq:app_td}, ReBRAC modifies its
bootstrap target, and the in-sample critic replaces it with a value target fit by expectile regression.
We summarize each scheme below.

\paragraph{Value penalty.} Adds to \cref{eq:app_td} the gap between the critic's value at the policy's
own action and at the dataset action, as in \cref{eq:gapreg},
\begin{align*}
    \mathcal{L}(\phi)
    = \mathcal{L}_{\mathrm{TD}}(\phi)
    + c\,\E_{s\sim\mathcal{D}}\!\left[
      Q^\pi_\phi(s, \operatorname{sg}(a_\pi)) - Q^\pi_\phi(s, a_{\mathrm{data}})
    \right].
\end{align*}
The stop-gradient keeps the penalty's gradient off the policy, and reusing $a_\pi$ means the
penalty draws no additional sample (\cref{sec:critic}).

\paragraph{Gradient-norm penalty.} Constrains the magnitude of the gradient the scalar adjoint uses rather
than the value penalty,
\begin{align*}
    \mathcal{L}(\phi)
    = \mathcal{L}_{\mathrm{TD}}(\phi)
    + c_g\,\E_{s\sim\mathcal{D}}\!\left[
      \big\|\nabla_a Q^\pi_\phi(s, \operatorname{sg}(a_\pi))\big\|^2
    \right].
\end{align*}
A first-order expansion of the value penalty relates the two penalties (\cref{sec:gain}). 

\paragraph{In-sample critic~\citep{kostrikov2021offlinereinforcementlearningimplicit}.} Replaces the
bootstrap action with a state value fit by expectile regression, so the critic is never evaluated off
the dataset's actions,
\begin{align*}
    \mathcal{L}_V(\xi) &= \E_{(s,a)\sim\mathcal{D}}\!\left[L_2^\kappa\big(Q^\pi_{\bar\phi}(s, a) - V_\xi(s)\big)\right], \\
    \mathcal{L}(\phi) &= \E_{(s,a,r,s')\sim\mathcal{D}}\!\left[\big(Q^\pi_\phi(s, a) - r - \gamma\, V_\xi(s')\big)^2\right],
\end{align*}
with $L_2^\kappa(u) = |\kappa - \mathbf{1}(u < 0)|\, u^2$.

\paragraph{ReBRAC's bootstrap penalty~\citep{tarasov2023revisitingminimalistapproachoffline}.}
Subtracts the distance between the policy's next-state action and the dataset's from the bootstrap
target, so the control acts on the actions rather than on the value the critic assigns them,
\begin{align*}
    \mathcal{L}(\phi)
    = \E_{(s,a,r,s')\sim\mathcal{D}}\!\left[
      \Big(Q^\pi_\phi(s, a) - r - \gamma\big(Q^\pi_{\bar\phi}(s', a') - \beta\lVert a' - a'_{\mathrm{data}}\rVert^2\big)\Big)^2
    \right],
\end{align*}
with $a' \sim \pi_\theta(\cdot\mid s')$ and $a'_{\mathrm{data}}$ the dataset action at $s'$.

\section{Experimental details}
\label{app:expdetails}

\subsection{Domains and tasks}
\label{app:domains}
We evaluate on OGBench~\citep{ogbench_park2025}, an offline goal-conditioned RL benchmark, using its
reward-based single-task variants. We use all $10$ domains, spanning long-horizon navigation,
multi-object manipulation, and combinatorial planning, with $5$ tasks per domain and $50$ tasks in
all. The domains are abbreviated as \texttt{scene}, \texttt{puzzle-3x3} (\texttt{p33}),
\texttt{puzzle-4x4} (\texttt{p44}), \texttt{cube-double} (\texttt{c2}), \texttt{cube-triple}
(\texttt{c3}), \texttt{cube-quadruple} (\texttt{c4}), \texttt{humanoidmaze-medium} (\texttt{hm}),
\texttt{humanoidmaze-large} (\texttt{hl}), \texttt{antmaze-large} (\texttt{al}),
and \texttt{antmaze-giant} (\texttt{ag}).

The dataset size, episode length, and action dimension of each domain are reported in
\cref{tab:metadata}. For each method and task we run $8$ random seeds. Unless otherwise stated,
tables report mean success rate $\pm$ one standard deviation across seeds, and plots show the mean with
shaded regions denoting the same.

\paragraph{Dataset structure.} Unlike teleoperation-style benchmarks where each demonstration solves
the target task directly, OGBench's offline data is task-agnostic. The \texttt{navigate} datasets
capture free maze exploration and the \texttt{play} datasets capture unstructured object manipulation,
so the behavior-cloned policy is a behavioural prior rather than a task-specific solution.

\paragraph{Dataset sources.} We use the official OGBench datasets~\citep{ogbench_park2025} except where
noted, and follow the dataset choices of \citet{dong2026trustregionqadjoint} for the three larger ones. For
\texttt{cube-triple-10M-*} and \texttt{puzzle-4x4-10M-*} we use a 10M subset of the official 100M
release, which is split into 100 files of 1M transitions each, taking the first 10 files sorted by
name~\citep{kim2025deasdetachedvaluelearning,dong2026trustregionqadjoint}. For \texttt{antmaze-giant-10M-*}, OGBench does not release a
dataset at this size, so we use the dataset of \citet{dong2026trustregionqadjoint}, which was generated with the
official OGBench data-generation pipeline at default settings.

\subsection{Hyperparameters}
\label{app:hyperparams}
Most methods share the common hyperparameters in \cref{tab:rl-hyperparams}, and the
domain-specific ones are listed in \cref{tab:hparam}. The baseline results are drawn from the
exported runs of \citet{dong2026trustregionqadjoint}, which adopt the per-domain values reported as optimal
by \citet{li2026qlearningadjointmatching} for FQL, DSRL, IFQL, CGQL-L, QAM, and QAM-E, and tune $\varepsilon_{\mathrm{KL}}$ for
TRQAM. \method{} adopts the per-domain KL budgets of TRQAM, including the time-varying schedule on
\texttt{antmaze-giant}, $\varepsilon_{\mathrm{KL}}=0.5$ offline opening to $3.0$ at the online
transition. We tune the value penalty coefficient following the protocol of \citet{dong2026trustregionqadjoint},
comparing $c\in\{0, 0.3\}$ on two tuning tasks per domain, task 1 and task 4 for locomotion and task 2
and task 4 for manipulation, with 2 seeds per configuration that differ from the evaluation seeds. On
the two domains where TRQAM uses $\varepsilon_{\mathrm{KL}}=1.0$, \texttt{cube-quadruple} and
\texttt{antmaze-large}, we additionally check $\varepsilon_{\mathrm{KL}}=0.5$ under the same protocol, and adopt it on
\texttt{cube-quadruple}. The selected values are listed in \cref{tab:hparam}, and all reported
results average over $8$ evaluation seeds.

\begin{table}[ht]
\centering
\small
\caption{\textbf{Domain metadata.} Following OGBench, we group the domains into locomotion and
manipulation. Action dimensions are per control step, and the manipulation domains use action chunks
of size $5$.}
\label{tab:metadata}
\setlength{\tabcolsep}{6pt}
\renewcommand{\arraystretch}{0.95}
\begin{tabular}{@{}llccc@{}}
\toprule
\textbf{Domain} & \textbf{Category} & \textbf{Data} & \textbf{Horizon} & \textbf{Act. dim.} \\
\midrule
\texttt{cube-double-*}          & Manipulation & 1M   & 500  & 5  \\
\texttt{cube-triple-10M-*}      & Manipulation & 10M  & 1000 & 5  \\
\texttt{cube-quadruple-100M-*}  & Manipulation & 100M & 1000 & 5  \\
\texttt{antmaze-large-*}        & Locomotion & 1M   & 1000 & 8  \\
\texttt{antmaze-giant-10M-*}    & Locomotion & 10M  & 1000 & 8  \\
\texttt{humanoidmaze-medium-*}  & Locomotion & 1M   & 2000 & 21 \\
\texttt{humanoidmaze-large-*}   & Locomotion & 1M   & 2000 & 21 \\
\texttt{scene-*}                & Manipulation & 1M   & 750  & 5  \\
\texttt{puzzle-3x3-*}           & Manipulation & 1M   & 500  & 5  \\
\texttt{puzzle-4x4-10M-*}       & Manipulation & 10M  & 500  & 5  \\
\bottomrule
\end{tabular}
\end{table}

\begin{table}[!htbp]
\centering
\small
\caption{\textbf{Common hyperparameters.}}
\label{tab:rl-hyperparams}
\setlength{\tabcolsep}{5pt}
\renewcommand{\arraystretch}{0.95}
\begin{tabular}{@{}lc@{}}
\toprule
\textbf{Parameter} & \textbf{Value} \\
\midrule
Batch size                              & 256 \\
Discount factor ($\gamma$)              & 0.995 (default), 0.999 (\texttt{humanoidmaze}) \\
Optimizer                               & Adam \\
Learning rate                           & $3 \times 10^{-4}$ \\
Target network update rate              & $5 \times 10^{-3}$ \\
Critic ensemble size ($K$)              & 10 \\
Critic pessimism coefficient ($\rho$)   & 0.5 (default), 0 (\texttt{humanoidmaze}) \\
UTD ratio                               & 1 \\
Number of flow steps ($T$)              & 10 \\
BC training steps                       & $0.3 \times 10^{6}$ \\
Offline RL steps                        & $10^{6}$ \\
Online RL steps                         & $0.5 \times 10^{6}$ \\
Network width                           & 512 (default), 1024 (10M/100M data) \\
Network depth                           & 4 \\
Gradient max-norm clipping              & False (default), 1 (\texttt{QAM}, \texttt{QAM-E}, \texttt{TRQAM}, \texttt{SQAM}) \\
Actor layer norm                        & False (default), True (10M/100M data) \\
Critic layer norm                       & True \\
\bottomrule
\end{tabular}
\end{table}

\begin{table}[!htbp]
\centering
\small
\caption{\textbf{Domain-specific hyperparameters.}}
\label{tab:hparam}
\setlength{\tabcolsep}{3pt}
\renewcommand{\arraystretch}{1.03}
\begin{tabular}{@{}lcccccccc@{}}
\toprule
Domain
  & \texttt{FQL}
  & \texttt{DSRL}
  & \texttt{IFQL}
  & \texttt{CGQL-L}
  & \texttt{QAM}
  & \texttt{QAM-E}
  & \texttt{TRQAM}
  & \method{} \\
  & $\alpha$
  & $\sigma_z$
  & $\kappa$
  & $(\vartheta,\varrho,\tau)$
  & $\beta$
  & $(\beta,\sigma_a)$
  & $\varepsilon_{\mathrm{KL}}$
  & $(\varepsilon_{\mathrm{KL}}, c)$ \\
\midrule
\texttt{scene-*}                & 300 & 0.4 & 0.9  & $(10, 0.1, 0.1)$   & 1  & $(1, 0)$    & 0.5 & $(0.5, 0.3)$ \\
\texttt{puzzle-3x3-*}           & 300 & 1.0 & 0.95 & $(10, 0.001, 0.1)$ & 3  & $(1, 0.1)$  & 2.0 & $(2.0, 0.3)$ \\
\texttt{puzzle-4x4-10M-*}       & 1   & 1.0 & 0.9  & $(10, 0.001, 1)$   & 30 & $(0.1, 0.9)$& 4.0 & $(4.0, 0.3)$ \\
\texttt{cube-double-*}          & 300 & 1.0 & 0.9  & $(10, 0.001, 0.01)$& 1  & $(1, 0)$    & 0.5 & $(0.5, 0)$ \\
\texttt{cube-triple-10M-*}      & 30  & 1.4 & 0.95 & $(10, 0.001, 0.1)$ & 3  & $(3, 0.1)$  & 0.5 & $(0.5, 0.3)$ \\
\texttt{cube-quadruple-100M-*}  & 100 & 1.4 & 0.95 & $(10, 0.1, 0.01)$  & 1  & $(3, 0.1)$  & 1.0 & $(0.5, 0.3)$ \\
\texttt{antmaze-large-*}        & 3   & 0.8 & 0.9  & $(10, 0.001, 0.1)$ & 10 & $(1, 0.1)$  & 1.0 & $(1.0, 0.3)$ \\
\texttt{antmaze-giant-10M-*}    & 3   & 1.2 & 0.8  & $(10, 0.001, 0.1)$ & 3  & $(10, 0.1)$ & $0.5\!\to\!3.0$ & $(0.5\!\to\!3.0, 0.3)$ \\
\texttt{humanoidmaze-medium-*}  & 30  & 0.6 & 0.7  & $(10, 0.1, 0.1)$   & 3  & $(3, 0.1)$  & 0.5 & $(0.5, 0.3)$ \\
\texttt{humanoidmaze-large-*}   & 30  & 0.8 & 0.8  & $(10, 0.1, 0.1)$   & 3  & $(3, 0.1)$  & 0.5 & $(0.5, 0.3)$ \\
\bottomrule
\end{tabular}
\end{table}

\section{Real-robot experiments: RB-Y1 with Wuji Hand~2}
\label{app:vlasetup}

This appendix documents the real-robot experiments on a dual-arm RB-Y1
humanoid~\citep{rainbow_rby1} equipped with two Wuji Hand~2 dexterous hands~\citep{wuji_hand2}. For
each of three manipulation tasks we follow the same pipeline: (i)~behavior cloning (BC) of a
vision-language-action base policy on teleoperated demonstrations, (ii)~collection of 30 rollout
episodes of the BC policy on the robot, (iii)~offline RL for 15K steps on the demonstrations plus these
30 BC rollout episodes, (iv)~online RL for a further 10K steps that continues from the offline
checkpoint while the robot collects new episodes, and (v)~a 30-episode evaluation of every checkpoint.
The same protocol is run for our method (\method{}) and for the EXPO-FT
baseline~\citep{dong2026expoftsampleefficientreinforcementlearning}.

\subsection{Platform, observations and actions}
\label{app:rby1:platform}
\begin{figure}
    \centering
    \includegraphics[width=0.5\linewidth]{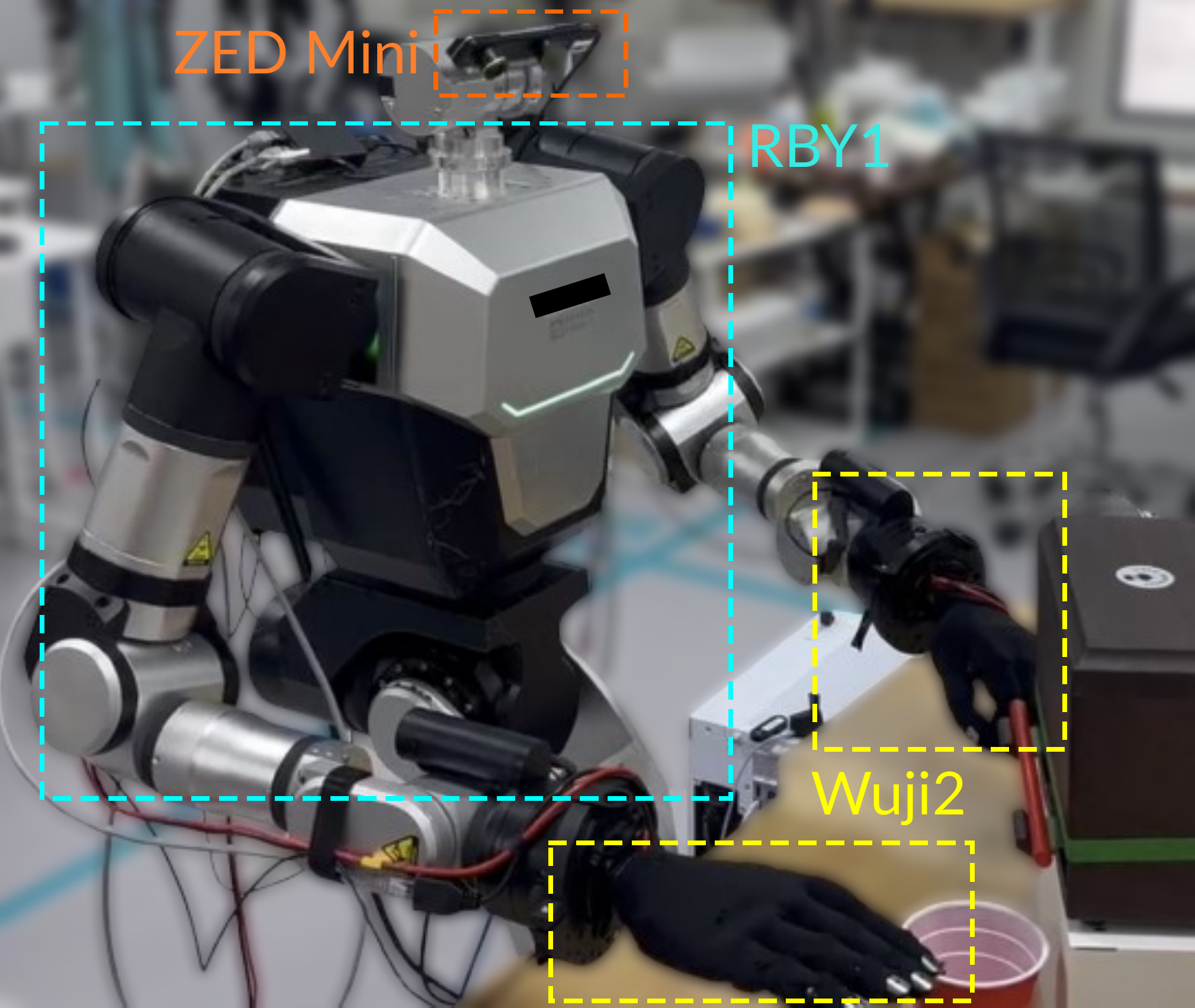}
    \caption{Hardware setup: RB-Y1 humanoid with a Wuji Hand~2 on each wrist
and a head-mounted ZED Mini stereo camera.}
    \label{fig:rby1:hw}
\end{figure}

\paragraph{Hardware.}
We use a Rainbow Robotics RB-Y1~\citep{rainbow_rby1} dual-arm humanoid with a 20-DoF Wuji
Hand~2~\citep{wuji_hand2} mounted on each wrist. The experiments use 54 of its degrees of freedom, the two 7-DoF arms and the
two 20-DoF hands, for bimanual tabletop manipulation. The torso and neck stay at fixed joint positions, and
the mobile base is not used. A single Stereolabs ZED Mini stereo camera~\citep{stereolabs_zedmini} mounted on the
head provides the visual input (\cref{fig:rby1:hw}).

\paragraph{Observation space.}
At every policy query the observation has three parts. The first is two ego-centric RGB images, the
left and right views of the ZED Mini stereo pair, captured at 30\,Hz and resized to $192\times320$. The
second is a 62-dimensional proprioceptive vector of measured joint positions. The third is the task instruction
(\cref{tab:rby1:tasks}).

\paragraph{Action space.}
Actions are absolute joint-position targets. The base policy predicts an action chunk of 24 steps, and
its first 16 steps are executed. RL acts only on these $16$ executed steps and on the 54 arm and
hand joints. \method{} optimizes this sub-chunk by
updating the action head of the base policy, whereas EXPO-FT keeps the base policy frozen and adds a
residual to it through its edit policy.

\paragraph{Control loop.}
The robot runs at a nominal control rate of 30\,Hz and executes the policy synchronously. At each
policy query the current observation is sent to the actor and the returned 16 step chunk of
joint-position targets is executed step by step. A new observation is taken and the next chunk is
requested only after the chunk has been completed.

\subsection{Tasks and evaluation protocol}
\label{app:rby1:eval}

\cref{tab:rby1:tasks} lists the three tasks with their language instructions,
demonstration counts, and horizon parameters. All three tasks use a sparse reward, which is $1$ when the
episode succeeds and $0$ otherwise.

\begin{table}[!htbp]
\centering
\small
\caption{\textbf{Task specification.} For each task we list the language instruction, the number of
teleoperated demonstrations, and the discount factor $\gamma$.}
\label{tab:rby1:tasks}
\setlength{\tabcolsep}{5pt}
\renewcommand{\arraystretch}{0.95}
\begin{tabular}{@{}llcc@{}}
\toprule
\textbf{Task} & \textbf{Language instruction} & \textbf{Demos} & $\gamma$ \\
\midrule
\texttt{flip-plastic-bag} & ``flip the plastic bag''                       & 50 & 0.999 \\
\texttt{place-straw}      & ``place the straw in the cup''                 & 50 & 0.999 \\
\texttt{put-and-close}    & ``put the fruit in the pot and close the lid'' & 50 & 0.999 \\
\bottomrule
\end{tabular}
\end{table}

Each task uses a fixed initial configuration for the teleoperated demonstrations, training, and
evaluation, identical across methods. Every checkpoint is
evaluated over $30$ held-out episodes, and success is judged by the operator from the final state of
the episode. At serving time \method{} samples one
action chunk per query, while EXPO-FT serves the best of six sampled candidates per query.

\paragraph{Differences from the EXPO-FT recipe.} We run EXPO-FT with three changes to its released
recipe, so that both methods follow the same protocol. First, neither method uses the corrective
actions from a human operator that the EXPO-FT recipe includes. Second, its base policy is kept frozen and is
not updated with its original training objective. Third, the critic of both methods reads the same
mean-pooled VLM feature in place of the separate ResNet encoder of EXPO-FT, to reduce the compute
cost. The remaining settings are shared with \method{} (\cref{tab:rby1:sqam}), and EXPO-FT uses
best-of-$N$ sampling with $N=6$ at training and serving time.

\subsection{Training details}
\label{app:rby1:training}

Since \method{} updates the pretrained policy itself, its actor learning rate is small. EXPO-FT trains a residual edit policy and a Q head on top of the frozen base policy, with the
changes listed in \cref{app:rby1:eval}, and its online run continues from its own 15K offline
checkpoint. Its learning rates are those of EXPO-FT, and its edit scale is the smallest value that work uses. \cref{tab:rby1:sqam} lists all hyperparameters.

\begin{table}[!htbp]
\centering
\small
\caption{\textbf{Hyperparameters for the real-robot runs.} The shared block applies to both methods,
and the blocks below give each method's own values.}
\label{tab:rby1:sqam}
\setlength{\tabcolsep}{5pt}
\renewcommand{\arraystretch}{0.95}
\begin{tabular}{@{}ll@{}}
\toprule
\textbf{Parameter} & \textbf{Value} \\
\midrule
\multicolumn{2}{@{}l}{Shared} \\
Offline RL steps                          & 15{,}000 \\
Online RL steps                           & 10{,}000 \\
Batch size                                & 128 offline, 192 online \\
Policy delay                              & 5 \\
UTD ratio                                 & 1 \\
Action horizon                            & 16 \\
Action DoF                                & 54 \\
Critic ensemble size ($K$)                & 10 \\
Network width                             & 512 \\
Network depth                             & 4 hidden layers \\
Critic input                              & mean-pooled VLM feature, proprioception, action chunk \\
\midrule
\multicolumn{2}{@{}l}{\method{}} \\
Actor / critic learning rate              & $10^{-5}$ / $10^{-4}$ \\
Value penalty coefficient ($c$)           & 0.01 \\
KL budget ($\varepsilon_{\mathrm{KL}}$)   & 0.5 \\
\midrule
\multicolumn{2}{@{}l}{EXPO-FT} \\
Actor / critic learning rate              & $3 \times 10^{-4}$ / $3 \times 10^{-4}$ \\
Best-of-$N$ sampling (training and serving) & 6 \\
Edit scale                            & 0.05 \\
\bottomrule
\end{tabular}
\end{table}

\subsection{Additional results}
\label{app:rby1:results}

\cref{fig:vlacurves} tracks the success rate of the training rollouts during the online phase,
and the rollout success rate of \method{} tends to increase as training proceeds on all three tasks. \cref{fig:flip_failure_mode,fig:straw_failure_mode,fig:pot_failure_mode} show the final frames of the evaluation
episodes and the main failure mode of each method. The difference is clearest on \texttt{flip-plastic-bag}. The superimposed final frames of SFT show
many episodes that end in the middle of flipping the bag, and these become visibly fewer under
\method{}, consistent with its higher success rate. EXPO-FT instead often drops the bag while flipping
it. \cref{fig:success_len} compares the duration
of the successful episodes. \method{} completes \texttt{flip-plastic-bag} and \texttt{put-and-close}
faster than SFT and \texttt{place-straw} in a comparable time.

\begin{figure}[t]
\begin{center}
\includegraphics[width=0.9\linewidth]{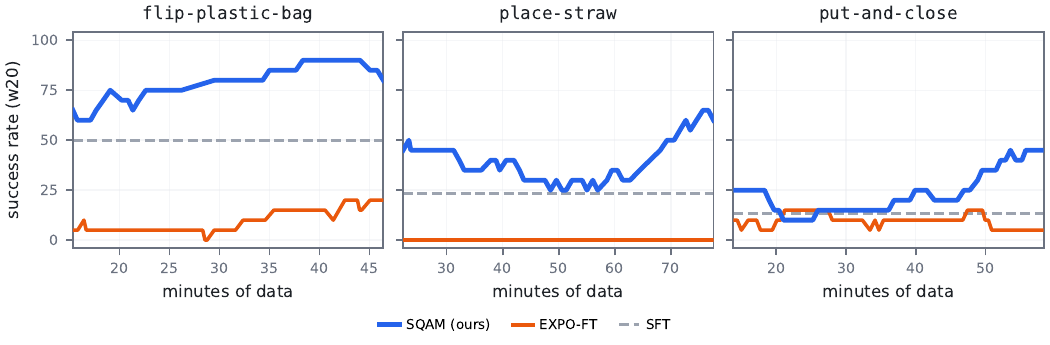}
\end{center}
\vspace{-8pt}
\caption{Success rate of the training rollouts during the online RL phase, over a trailing window of
20 episodes, against minutes of robot data collected. The dashed line is the SFT policy's held-out
success rate. \method{} works at the scale of a vision-language-action policy, improving over SFT on all
three tasks.}
\label{fig:vlacurves}
\end{figure}

\begin{figure}[ht]
\begin{center}
\includegraphics[width=0.88\linewidth]{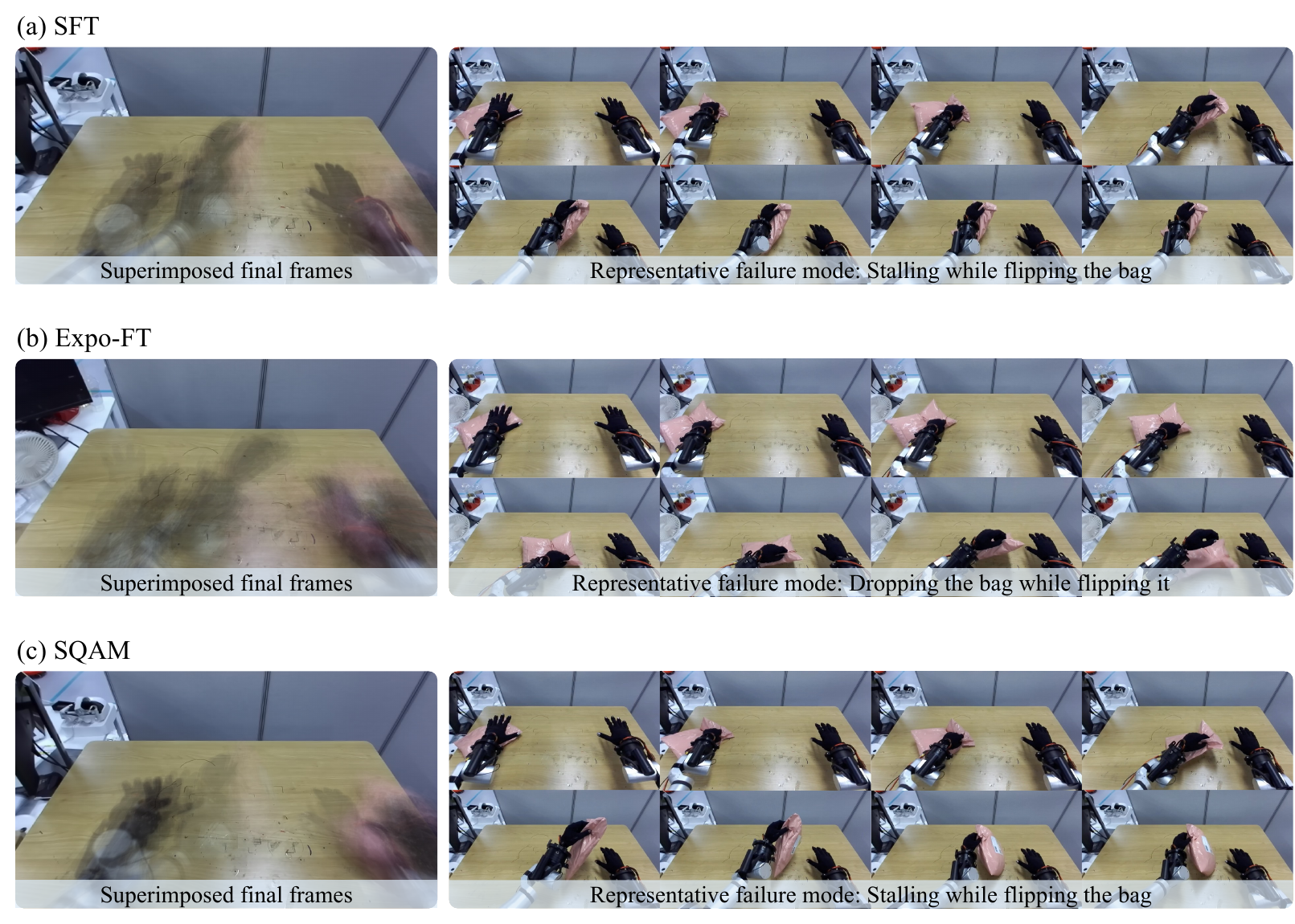}
\end{center}
\vspace{-8pt}
\caption{Evaluation episodes on \texttt{flip-plastic-bag}. Left: superimposed final frames of all episodes. Right: the main failure mode of each method.}
\label{fig:flip_failure_mode}
\end{figure}

\begin{figure}[ht]
\begin{center}
\includegraphics[width=0.88\linewidth]{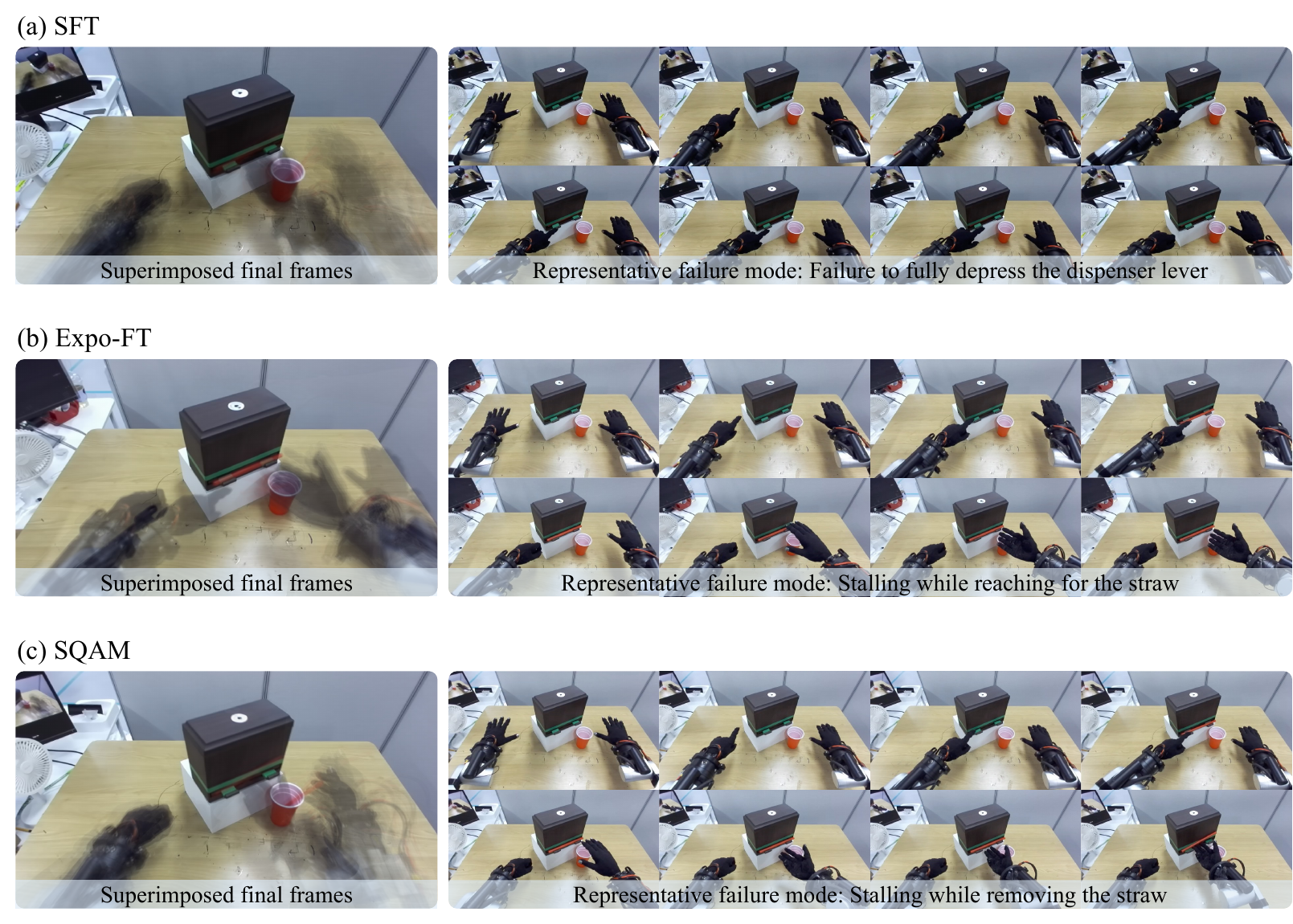}
\end{center}
\vspace{-8pt}
\caption{Evaluation episodes on \texttt{place-straw}. Left: superimposed final frames of all episodes. Right: the main failure mode of each method.}
\label{fig:straw_failure_mode}
\end{figure}

\begin{figure}[ht]
\begin{center}
\includegraphics[width=0.88\linewidth]{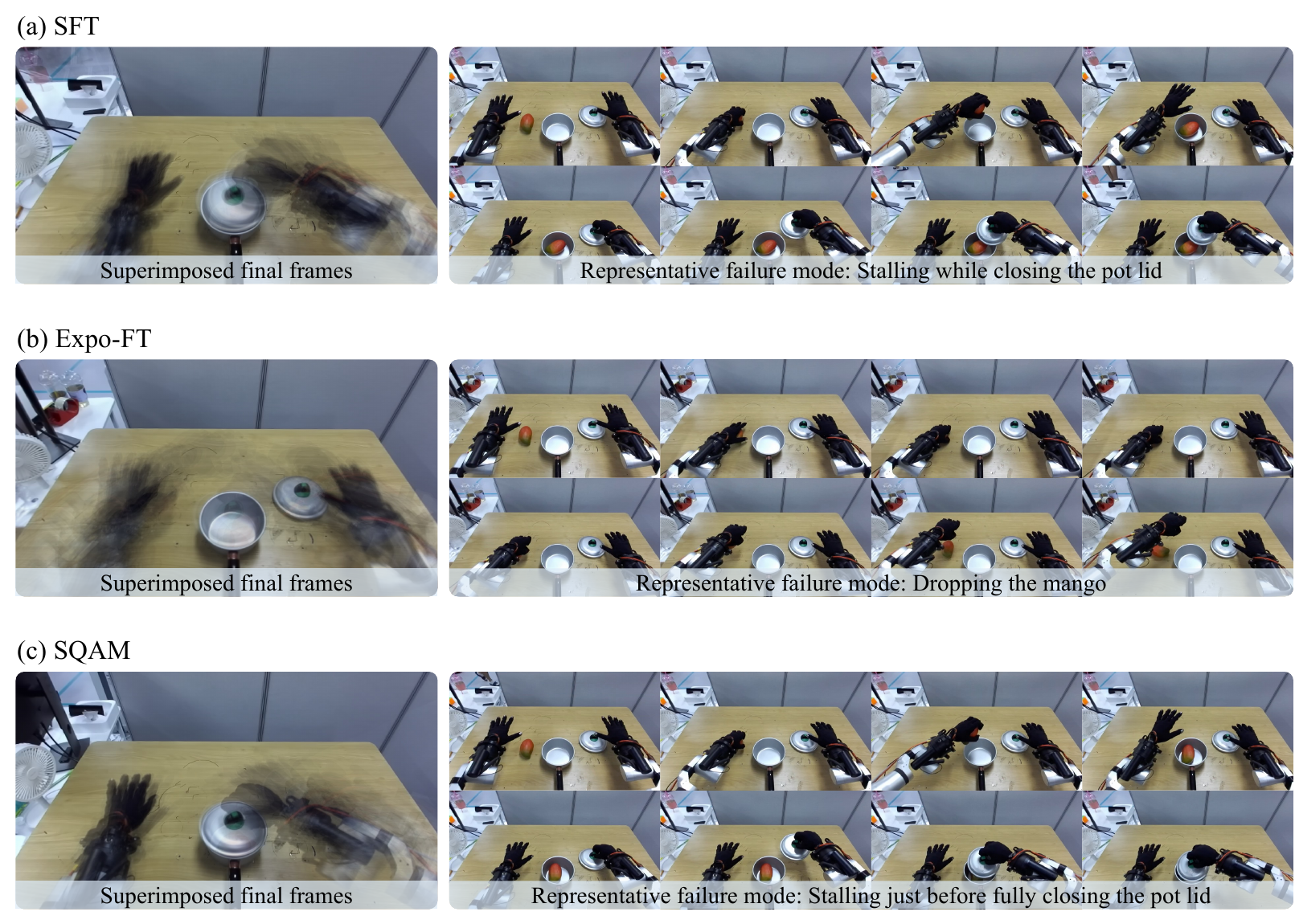}
\end{center}
\vspace{-8pt}
\caption{Evaluation episodes on \texttt{put-and-close}. Left: superimposed final frames of all episodes. Right: the main failure mode of each method.}
\label{fig:pot_failure_mode}
\end{figure}

\begin{figure}[ht]
\begin{center}
\includegraphics[width=\linewidth]{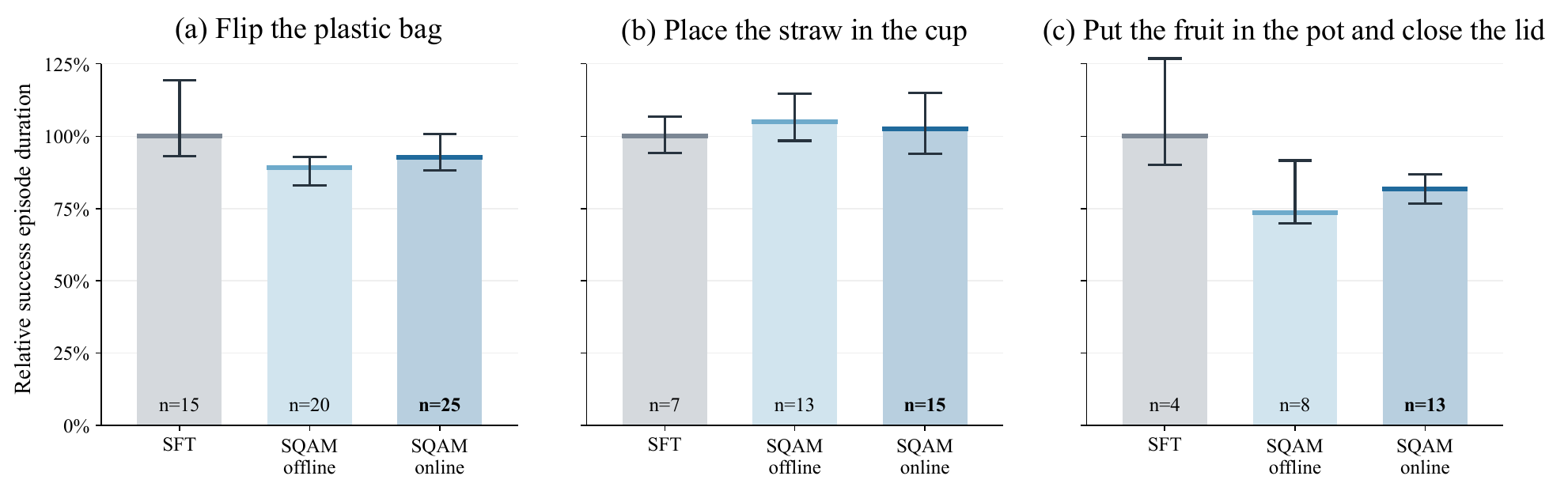}
\end{center}
\vspace{-8pt}
\caption{Success episode durations normalized by each task’s SFT median (100\%). Bars show medians, error bars indicate the 25th–75th percentiles, and \(n\) denotes the number of success episodes.}
\label{fig:success_len}
\end{figure}

\clearpage

\section{Additional experiments}
\label{app:additional}

\subsection{The cost of one update}
\label{app:cost}

To measure the compute that the scalar adjoint saves, \cref{fig:cost} compares the wall-clock
time of one \method{} update with that of the same update using the exact adjoint. The exact adjoint takes a vector--Jacobian
product through the policy at each of the $F$ flow steps, and the scalar adjoint takes none. The left
panel widens the actor with the critic fixed. The exact adjoint remains about half again as slow at
every width, so the saving persists on larger policies. The right panel varies the number of flow steps
$F$. The gap grows with $F$, since the cost of the exact adjoint grows with the number of
vector--Jacobian products while the rest of the update does not.

\begin{figure}[ht]
\begin{center}
\includegraphics[width=\linewidth]{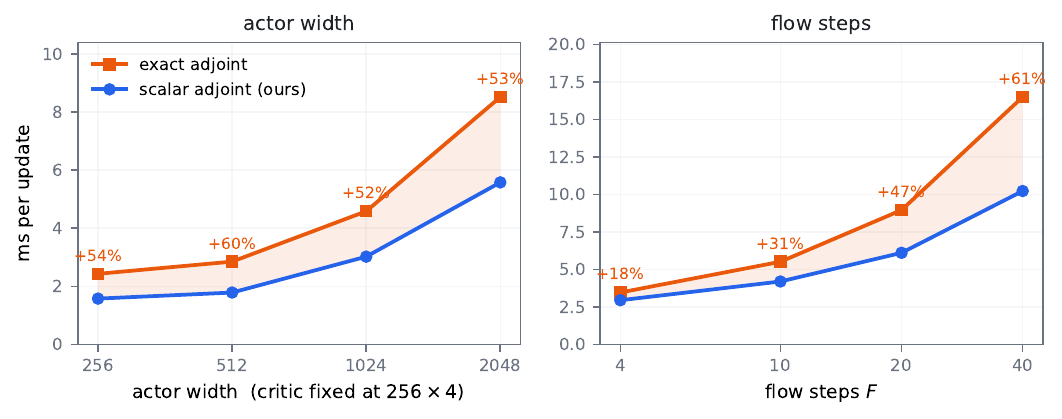}
\end{center}
\caption{Wall-clock time of one update with the exact and the scalar adjoint. \textbf{Left:} actor
width, with the critic fixed at $256\times4$. \textbf{Right:} number of flow steps $F$. Percentages give
the extra time of the exact adjoint.}
\label{fig:cost}
\end{figure}

\subsection{The value penalty inside TRQAM}
\label{app:chainpen}

\cref{sec:gain} explains the effect of the value penalty by where the scalar adjoint uses the
critic, which is its gradient at the policy-generated action, used directly at every flow step. The
exact adjoint instead passes this gradient through the velocity Jacobian at every flow step, so if the
explanation holds, the value penalty should help the exact adjoint less. To test this, we switch the
penalty on and off in both TRQAM and \method{} on \texttt{cube-triple} and \texttt{cube-quadruple}, the
two domains where the scalar adjoint alone underperforms. All four configurations use the same four seeds with the trust region active, and they differ only in the critic loss.

\cref{fig:chainpen} shows that the value penalty improves TRQAM as well, but far less than it
improves \method{}. At the offline endpoint, it raises TRQAM by $18$ points on \texttt{cube-triple} and
$23$ on \texttt{cube-quadruple}, and raises \method{} by $48.5$ and $78$ points. This matches the
prediction above.

\begin{figure}[t]
\begin{center}
\includegraphics[width=\linewidth]{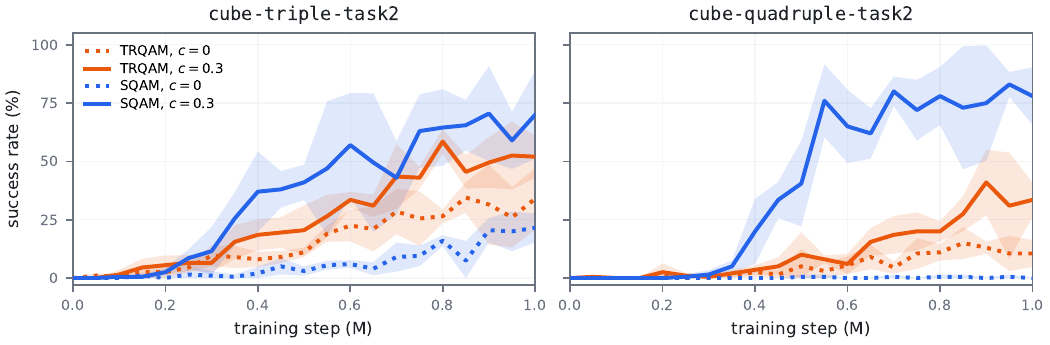}
\end{center}
\caption{TRQAM and \method{} with and without the value penalty ($c=0.3$ and $c=0$) during offline
training on task 2 of \texttt{cube-triple} and \texttt{cube-quadruple}. Four seeds per curve, with the
trust region at the KL budget of each method.}
\label{fig:chainpen}
\end{figure}

\subsection{Without the trust region}
\label{app:notrust}

\cref{sec:decomp} notes that \method{} can be used with or without the path-space trust region
of TRQAM. To run it without, we freeze the dual variable, $\eta_\lambda=0$, so that $\lambda$ stays at
a constant value as in QAM. \cref{fig:notrust} runs five constants
$\lambda\in\{0.1, 0.3, 1, 3, 10\}$ on one representative task from each of the ten domains, with the other settings, including the value
penalty coefficients, as in the main experiments and eight seeds per curve. The dashed line is \method{}
with the trust region.

Without the trust region, the effect of $\lambda$ is less predictable across domains. The best constant
differs by two orders of magnitude, for example $\lambda=0.1$ is the best value on
\texttt{antmaze-giant} but fails on \texttt{puzzle-3x3}, while $\lambda=10$ is the best value on
\texttt{puzzle-3x3} but fails on \texttt{antmaze-large}. With the trust region, the KL budget affects
performance more predictably (\cref{sec:sensitivity}) and keeps training stable, so we use it by
default.

\begin{figure}[p]
\begin{center}
\includegraphics[width=\linewidth]{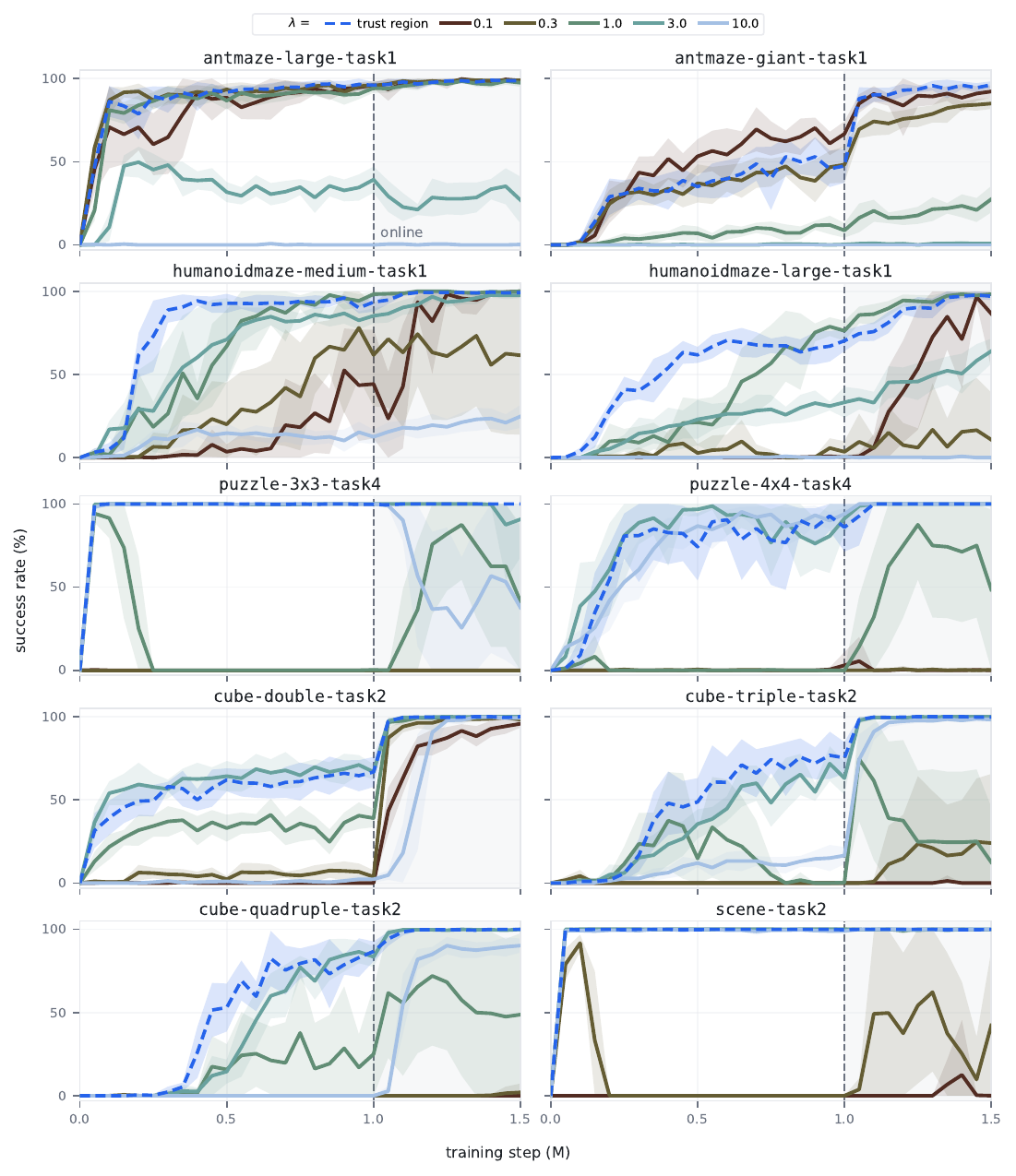}
\end{center}
\caption{\method{} with a constant $\lambda$, for five values, against \method{} with the trust region
(dashed) on one representative task from each of the ten domains. Eight seeds per curve, and bands are one standard deviation.}
\label{fig:notrust}
\end{figure}

\subsection{Batch-averaged velocity Jacobian on all domains}
\label{app:geometry}

\cref{fig:jacobian-all,fig:jacobian-all2} extend \cref{fig:jacobian} to all ten
domains at four flow times. As in \cref{fig:jacobian}, the velocity Jacobian of the pretrained
flow is averaged over a batch of states sampled from the dataset and normalized by its mean diagonal.
On every domain, the batch-averaged Jacobian is dominated by its diagonal.

\begin{figure}[p]
\begin{center}
\includegraphics[width=\linewidth]{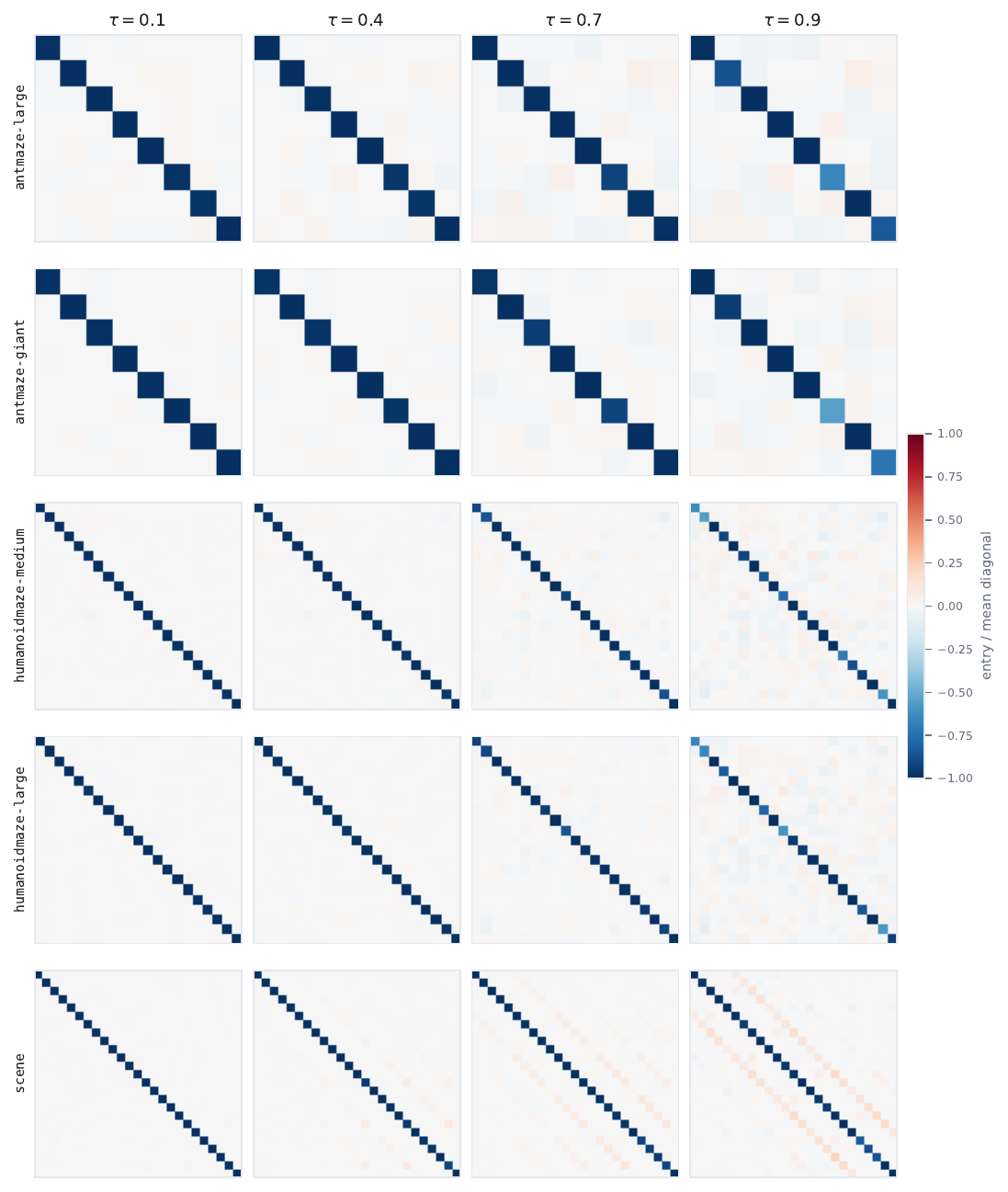}
\end{center}
\caption{The batch-averaged velocity Jacobian on the locomotion domains and \texttt{scene} at four flow
times, each panel normalized by its mean diagonal.}
\label{fig:jacobian-all}
\end{figure}

\begin{figure}[p]
\begin{center}
\includegraphics[width=\linewidth]{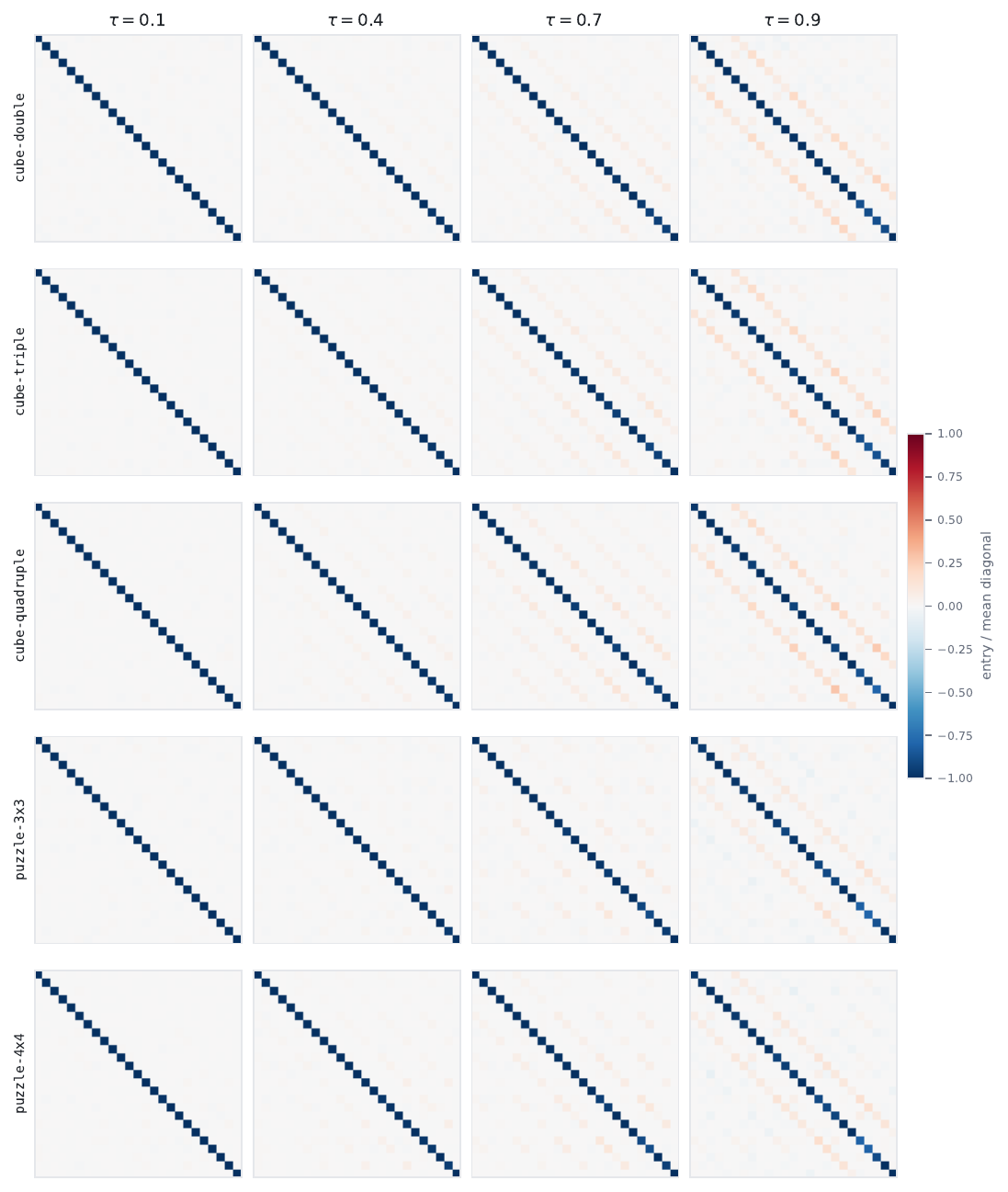}
\end{center}
\caption{The batch-averaged velocity Jacobian on the \texttt{cube} and \texttt{puzzle} domains at four
flow times, with the same normalization as \cref{fig:jacobian-all}.}
\label{fig:jacobian-all2}
\end{figure}

\subsection{Ablation studies on all domains}
\label{app:sensfull}

\cref{fig:kl-ten,fig:sweep-c} extend the ablations of \cref{sec:sensitivity} to
all ten domains, with one representative task per domain and eight seeds per setting, offline to 1M steps
and online to 1.5M. They sweep the KL budget and the value penalty coefficient $c$. The trends of the
main text hold across the ten domains. A small KL budget performs best or equally well on most domains.
The exception is \texttt{puzzle-4x4}, which prefers a larger budget, consistent with its larger state
space and with the budget that \citet{dong2026trustregionqadjoint} report for it. As in \cref{sec:sensitivity}, the value
penalty matters mainly on the domains where the scalar adjoint alone underperforms, with large gains on
\texttt{cube-triple} and \texttt{cube-quadruple} and little gain on most other domains. On \texttt{cube-double}, it lowers the offline success rate at $c=0.3$, so the main
experiments run this domain without the penalty.

\begin{figure}[p]
\begin{center}
\includegraphics[width=\linewidth]{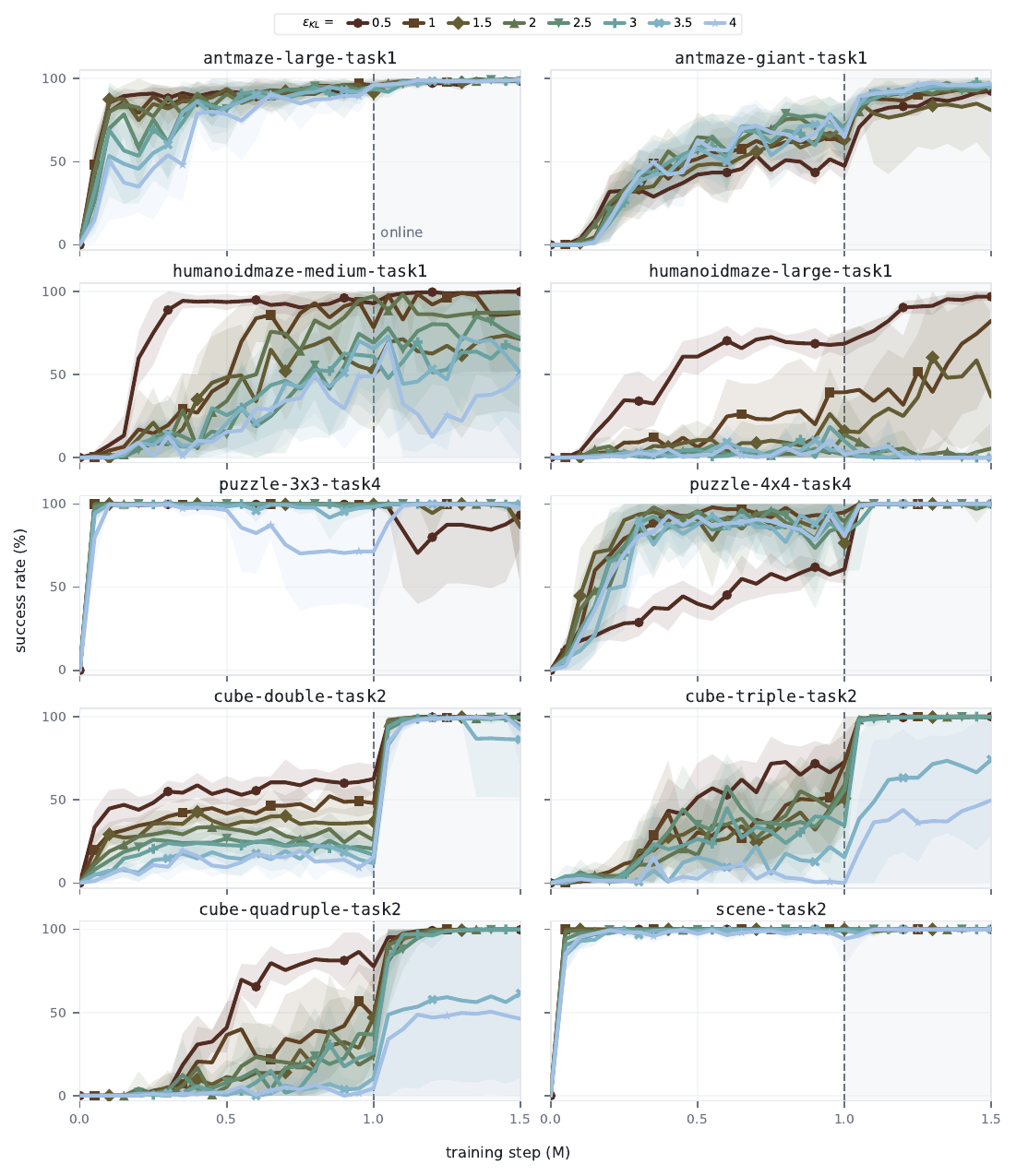}
\end{center}
\caption{The KL budget $\varepsilon_{\mathrm{KL}}$ swept from $0.5$ to $4$ on all ten domains. Eight
seeds per budget, and bands are one standard deviation.}
\label{fig:kl-ten}
\end{figure}

\begin{figure}[p]
\begin{center}
\includegraphics[width=\linewidth]{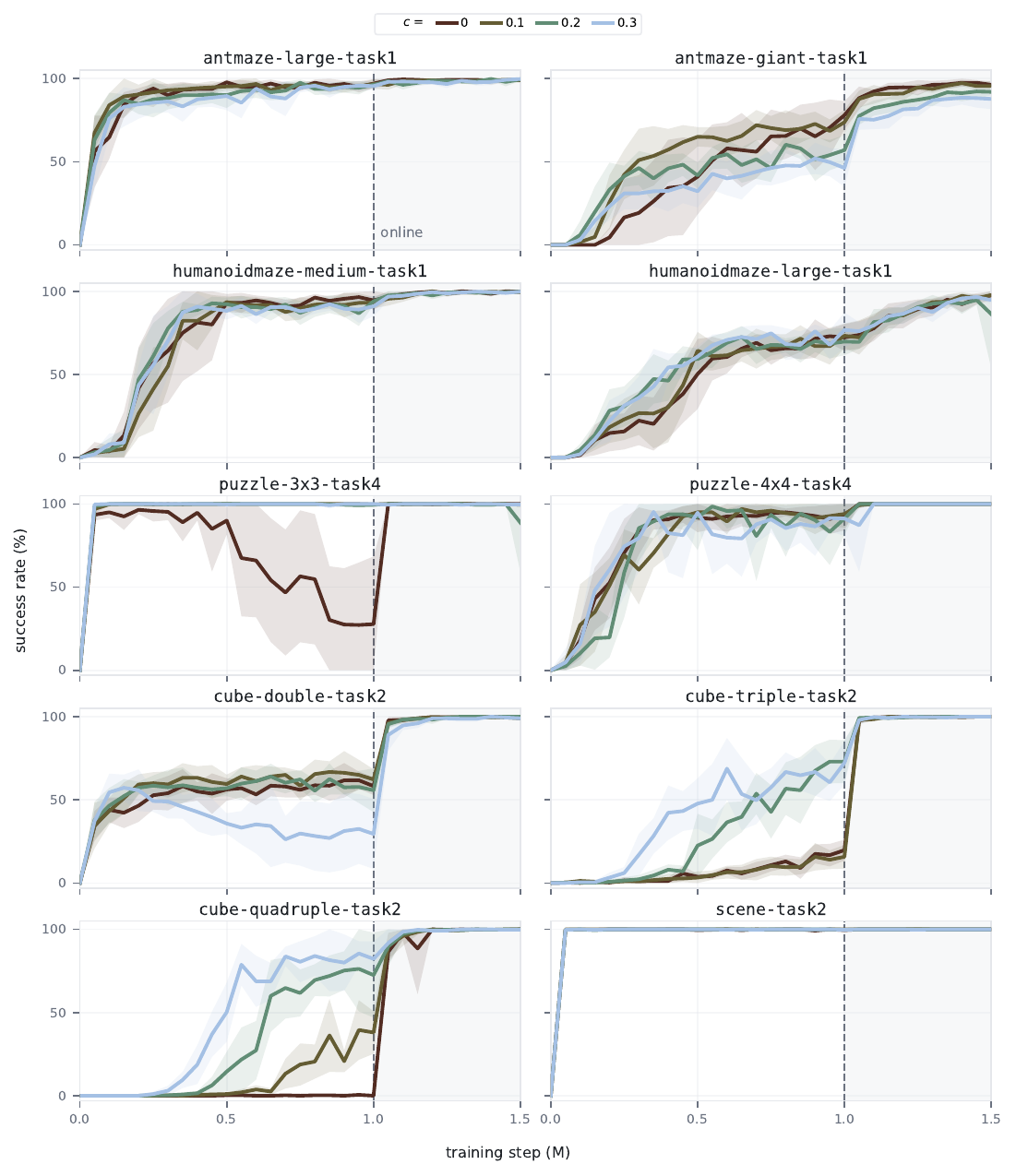}
\end{center}
\caption{The value penalty coefficient $c$ swept over $\{0, 0.1, 0.2, 0.3\}$ on all ten domains. Eight
seeds per coefficient, and bands are one standard deviation.}
\label{fig:sweep-c}
\end{figure}

\subsection{Value learning schemes on all domains}
\label{app:schemesfull}

\cref{fig:sweep-gn,fig:sweep-rb,fig:sweep-iq} extend the comparison of value
learning schemes in \cref{sec:gain} to all ten domains, with the same tasks, seeds, and steps as
\cref{app:sensfull}. They sweep the gradient-norm penalty, ReBRAC~\citep{tarasov2023revisitingminimalistapproachoffline}, and the in-sample critic~\citep{kostrikov2021offlinereinforcementlearningimplicit} over
their own coefficients, and the unregularized critic (\textit{none}) is drawn as a dashed reference. Consistent with \cref{sec:gain}, the
gradient-norm penalty also helps, most visibly on \texttt{puzzle-3x3} and \texttt{cube-quadruple},
but degrades several domains at its largest coefficient, ReBRAC stays close to the unregularized critic except at its largest $\beta$, and the
in-sample critic falls below the unregularized critic on most domains. The value penalty sweep of
\cref{fig:sweep-c} completes the comparison.

\begin{figure}[p]
\begin{center}
\includegraphics[width=\linewidth]{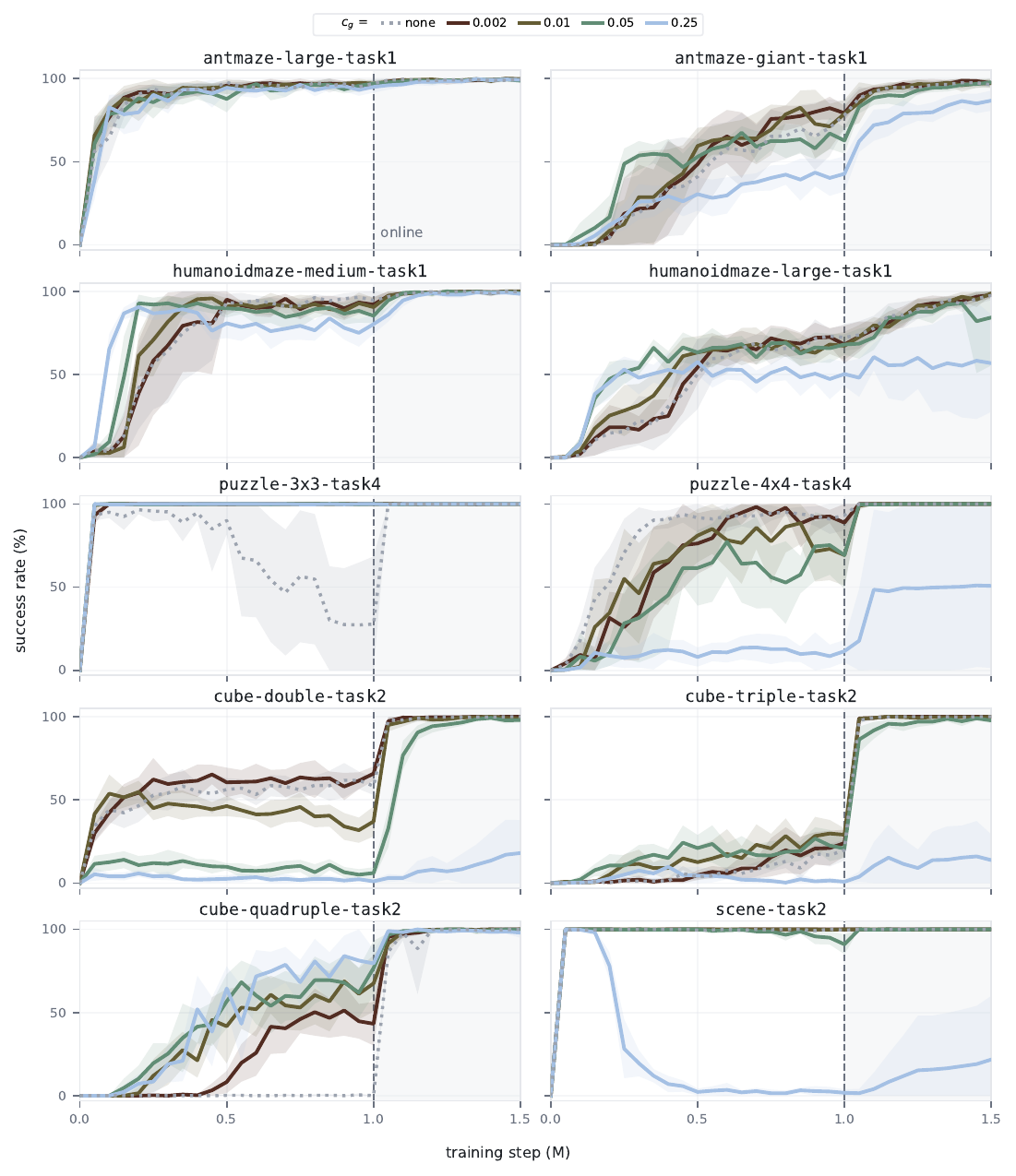}
\end{center}
\caption{The gradient-norm penalty coefficient $c_g$ swept over $\{0.002, 0.01, 0.05, 0.25\}$ on all ten domains,
with the unregularized critic (\textit{none}) as the dashed reference. Eight seeds per coefficient, and bands are one
standard deviation.}
\label{fig:sweep-gn}
\end{figure}

\begin{figure}[p]
\begin{center}
\includegraphics[width=\linewidth]{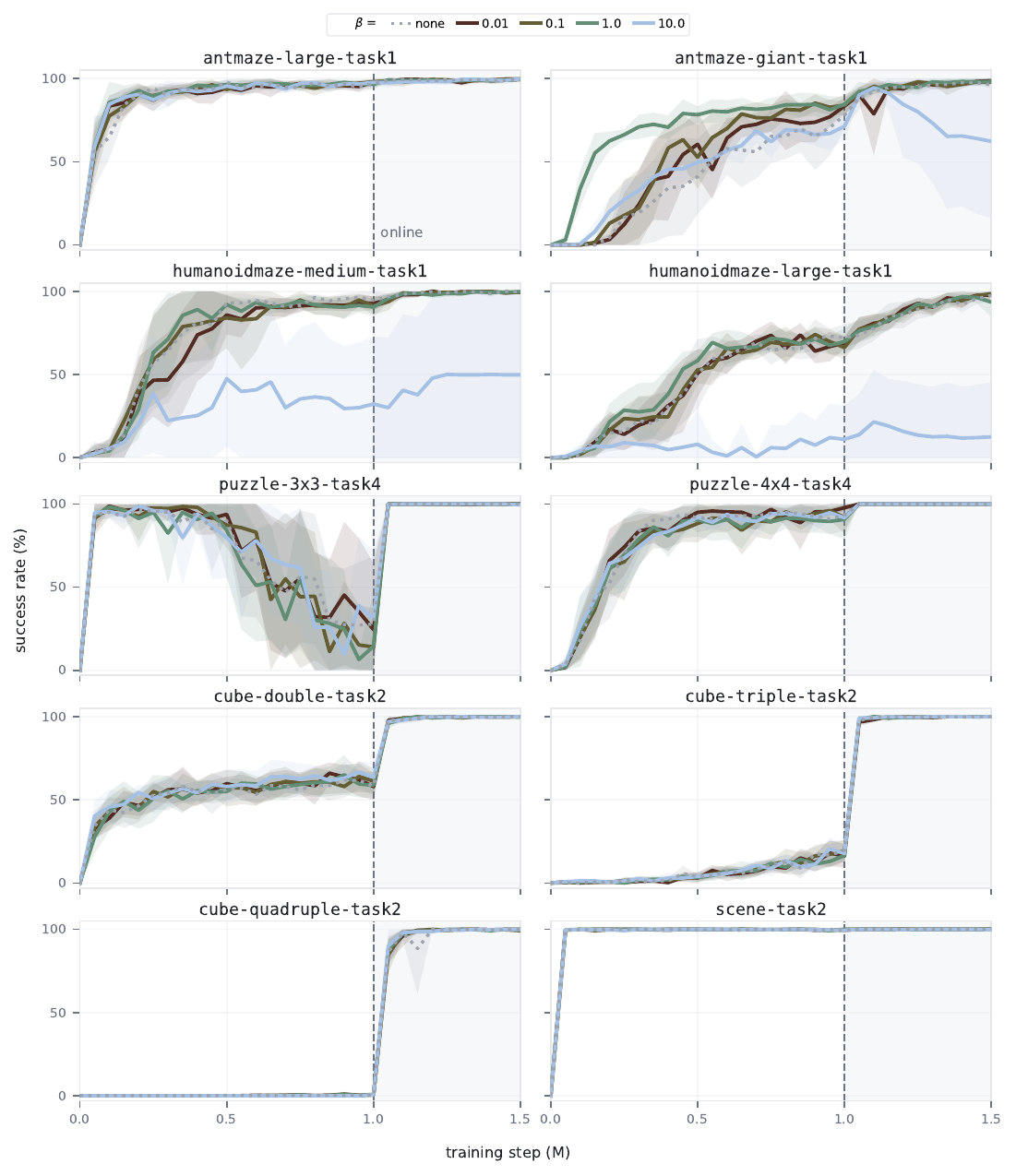}
\end{center}
\caption{ReBRAC's bootstrap penalty swept over $\beta \in \{0.01, 0.1, 1, 10\}$ on all ten domains,
with the unregularized critic (\textit{none}) as the dashed reference. Eight seeds per coefficient, and bands are one
standard deviation.}
\label{fig:sweep-rb}
\end{figure}

\begin{figure}[p]
\begin{center}
\includegraphics[width=\linewidth]{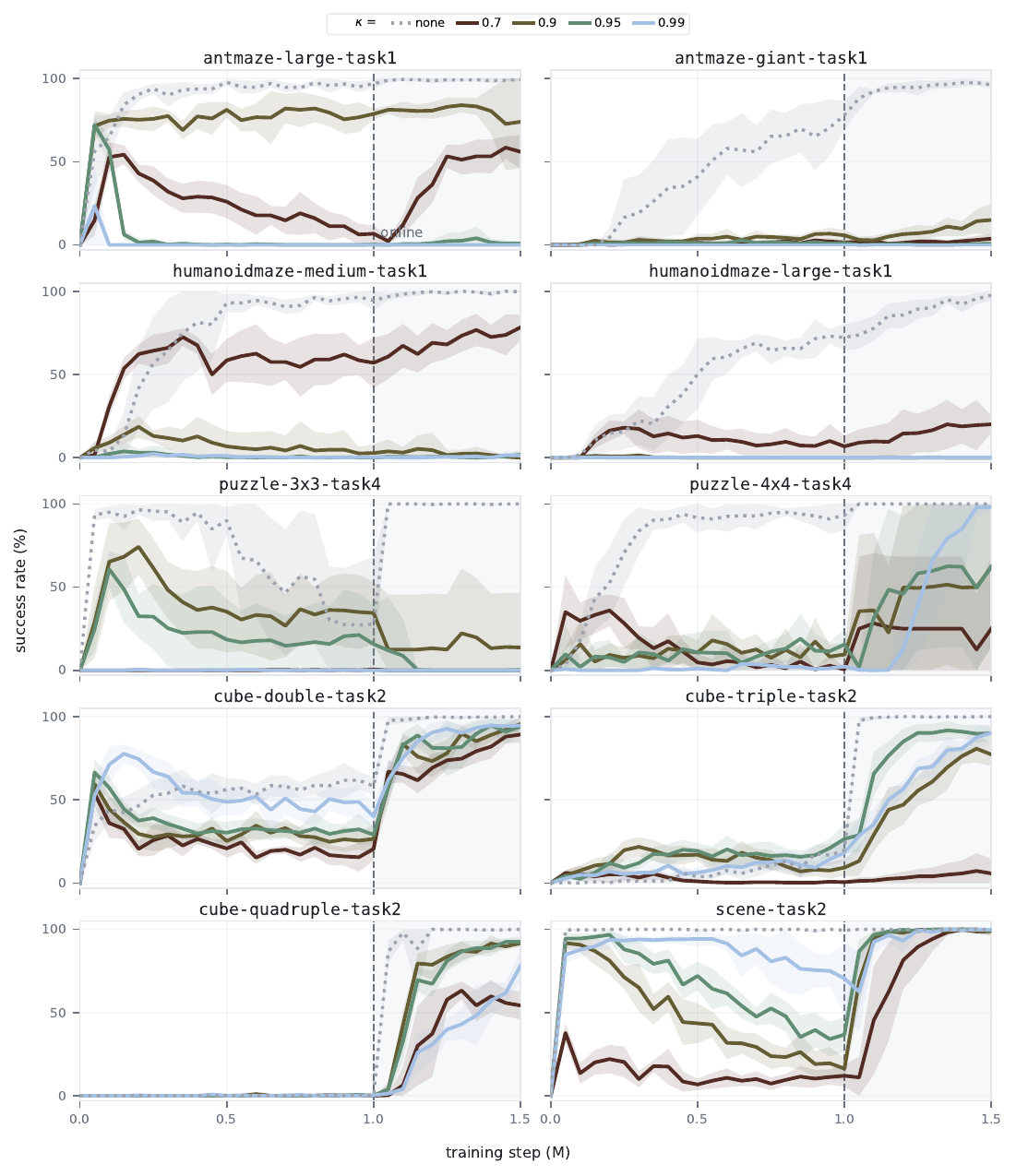}
\end{center}
\caption{The in-sample critic swept over the expectile $\kappa \in \{0.7, 0.9, 0.95, 0.99\}$ on all ten
domains, with the unregularized critic (\textit{none}) as the dashed reference. Eight seeds per expectile, and bands are
one standard deviation.}
\label{fig:sweep-iq}
\end{figure}

\section{Full OGBench results}
\label{app:benchmark}

\cref{tab:full-results} reports the offline success rate of every method on all fifty tasks at 1M
steps, the per-task breakdown behind \cref{tab:offline}. \cref{fig:bench-1,fig:bench-2} show the corresponding success rate curves, 1M offline steps followed by 0.5M
online, with eight seeds per method. \method{} uses the hyperparameters of
\cref{app:hyperparams} on every task.

\begin{table}[p]
    \centering
    \caption{\textbf{Full offline results at 1M training steps (8 seeds).} Mean success rate (\%) $\pm$
    one standard deviation.}
    \label{tab:full-results}
    \setlength{\abovecaptionskip}{4pt}

    \newcommand{\gval}[2]{{\color{gray!70}#1}\,{\color{gray!50}\scriptsize$\pm$#2}}
    \newcommand{\bmax}[2]{\underline{#1}\,{\color{gray!50}\scriptsize$\pm$#2}}
    
    \newcommand{\gavg}[1]{{\color{gray!70}#1}}
    \newcommand{\bavg}[1]{\underline{#1}}
    \newcommand{\ourmax}[2]{{\color{GoogleBlue}\textbf{\underline{#1}}}\,{\color{gray!50}\scriptsize$\pm$#2}}
    \newcommand{\ouravg}[1]{{\color{GoogleBlue}\textbf{\underline{#1}}}}

    \makebox[\textwidth][c]{
        \begin{minipage}{\textwidth}
            \centering
            \renewcommand{\arraystretch}{1.0}
            \setlength{\aboverulesep}{2pt}
            \setlength{\belowrulesep}{2pt}

            \adjustbox{max width=\textwidth, max totalheight=0.97\textheight}{
                \begin{tabular}{cl ccccccc >{\columncolor{ourlightblue}}c}
                \toprule
                 &  & \texttt{FQL} & \texttt{CGQL-L} & \texttt{DSRL} & \texttt{IFQL} & \texttt{QAM} & \texttt{QAM-E} & \texttt{TRQAM} & \texttt{SQAM} \\
                \midrule
                 & \texttt{task1}   & \gval{24}{44} & \gval{62}{38} & \gval{45}{12} & \gval{7}{11} & \bmax{96}{3} & \gval{93}{5} & \gval{95}{4} & \ourmax{96}{2} \\
                 & \texttt{task2}   & \gval{0}{0} & \gval{21}{36} & \gval{73}{7} & \gval{9}{14} & \gval{36}{49} & \bmax{91}{4} & \gval{87}{6} & \gval{85}{4} \\
                 & \texttt{task3}   & \gval{73}{14} & \gval{88}{5} & \gval{84}{5} & \gval{62}{24} & \gval{86}{7} & \bmax{93}{3} & \bmax{93}{3} & \ourmax{93}{4} \\
                 & \texttt{task4}   & \gval{0}{0} & \gval{0}{0} & \gval{0}{0} & \gval{35}{22} & \gval{0}{0} & \gval{66}{9} & \gval{75}{17} & \ourmax{88}{7} \\
                 & \texttt{task5}   & \gval{92}{4} & \gval{68}{42} & \gval{63}{8} & \gval{34}{24} & \gval{94}{3} & \gval{89}{5} & \bmax{96}{3} & \gval{93}{3} \\
                \multirow{-6}{*}{\texttt{antmaze-large}} & \texttt{agg. (5 tasks)}   & \gval{38}{9} & \gval{48}{7} & \gval{53}{2} & \gval{29}{8} & \gval{62}{9} & \gval{86}{3} & \gval{89}{4} & \ourmax{91}{3} \\
                \midrule
                 & \texttt{task1}   & \gval{1}{1} & \gval{11}{12} & \gval{2}{3} & \gval{0}{0} & \gval{37}{9} & \gval{0}{0} & \bmax{48}{5} & \ourmax{48}{11} \\
                 & \texttt{task2}   & \gval{0}{0} & \gval{14}{12} & \gval{0}{0} & \gval{43}{13} & \gval{18}{19} & \gval{0}{0} & \gval{21}{12} & \ourmax{86}{3} \\
                 & \texttt{task3}   & \gval{0}{0} & \gval{0}{0} & \gval{0}{0} & \gval{14}{8} & \gval{2}{2} & \gval{0}{0} & \gval{5}{5} & \ourmax{30}{10} \\
                 & \texttt{task4}   & \gval{0}{0} & \gval{0}{0} & \gval{0}{0} & \gval{1}{2} & \gval{20}{24} & \gval{0}{0} & \gval{58}{13} & \ourmax{61}{8} \\
                 & \texttt{task5}   & \gval{10}{29} & \gval{8}{23} & \gval{2}{2} & \gval{2}{2} & \gval{69}{8} & \gval{28}{38} & \gval{75}{7} & \ourmax{83}{8} \\
                \multirow{-6}{*}{\texttt{antmaze-giant-10M}} & \texttt{agg. (5 tasks)}   & \gval{2}{6} & \gval{7}{5} & \gval{1}{1} & \gval{12}{3} & \gval{29}{4} & \gval{6}{8} & \gval{41}{4} & \ourmax{62}{4} \\
                \midrule
                
                 & \texttt{task1}   & \gval{84}{21} & \gval{84}{9} & \gval{24}{23} & \gval{93}{3} & \gval{33}{34} & \gval{28}{30} & \gval{87}{5} & \ourmax{94}{6} \\
                 & \texttt{task2}   & \gval{99}{1} & \gval{99}{1} & \gval{88}{6} & \gval{94}{4} & \bmax{100}{1} & \gval{98}{2} & \gval{89}{7} & \gval{96}{3} \\
                 & \texttt{task3}   & \gval{89}{9} & \gval{0}{0} & \gval{64}{27} & \gval{96}{4} & \gval{90}{8} & \gval{74}{18} & \gval{90}{6} & \ourmax{98}{2} \\
                 & \texttt{task4}   & \gval{0}{0} & \gval{0}{0} & \gval{0}{0} & \bmax{82}{8} & \gval{0}{0} & \gval{0}{0} & \gval{61}{7} & \ourmax{82}{7} \\
                 & \texttt{task5}   & \gval{99}{1} & \bmax{100}{1} & \gval{88}{5} & \gval{99}{2} & \bmax{100}{1} & \gval{99}{1} & \gval{95}{4} & \ourmax{100}{1} \\
                \multirow{-6}{*}{\texttt{humanoidmaze-medium}} & \texttt{agg. (5 tasks)}   & \gval{74}{5} & \gval{57}{2} & \gval{53}{10} & \gval{93}{2} & \gval{64}{7} & \gval{60}{6} & \gval{84}{3} & \ourmax{94}{2} \\
                \midrule
                
                 & \texttt{task1}   & \gval{1}{2} & \gval{16}{11} & \gval{1}{1} & \gval{32}{5} & \gval{6}{10} & \gval{9}{14} & \gval{45}{8} & \ourmax{70}{7} \\
                 & \texttt{task2}   & \gval{0}{0} & \gval{0}{0} & \gval{0}{0} & \gval{6}{9} & \gval{0}{0} & \gval{0}{0} & \gval{10}{6} & \ourmax{25}{13} \\
                 & \texttt{task3}   & \gval{7}{4} & \gval{16}{8} & \gval{2}{3} & \bmax{76}{6} & \gval{7}{5} & \gval{3}{4} & \gval{34}{15} & \gval{57}{17} \\
                 & \texttt{task4}   & \gval{0}{0} & \gval{0}{1} & \gval{0}{0} & \gval{35}{26} & \gval{0}{0} & \gval{0}{0} & \gval{44}{13} & \ourmax{60}{14} \\
                 & \texttt{task5}   & \gval{0}{1} & \gval{0}{0} & \gval{1}{1} & \gval{0}{0} & \gval{5}{11} & \gval{9}{18} & \gval{45}{6} & \ourmax{59}{14} \\
                \multirow{-6}{*}{\texttt{humanoidmaze-large}} & \texttt{agg. (5 tasks)}   & \gval{2}{1} & \gval{6}{3} & \gval{1}{1} & \gval{30}{7} & \gval{4}{3} & \gval{4}{5} & \gval{36}{4} & \ourmax{54}{5} \\
                \midrule
                
                 & \texttt{task1}   & \bmax{100}{0} & \bmax{100}{0} & \bmax{100}{0} & \gval{99}{1} & \bmax{100}{0} & \bmax{100}{0} & \bmax{100}{0} & \ourmax{100}{0} \\
                 & \texttt{task2}   & \bmax{100}{1} & \gval{99}{1} & \bmax{100}{0} & \gval{2}{2} & \bmax{100}{0} & \bmax{100}{0} & \bmax{100}{0} & \ourmax{100}{1} \\
                 & \texttt{task3}   & \gval{82}{5} & \gval{93}{7} & \bmax{100}{1} & \gval{77}{7} & \gval{94}{4} & \gval{93}{4} & \bmax{100}{1} & \ourmax{99}{2} \\
                 & \texttt{task4}   & \gval{60}{22} & \gval{0}{0} & \bmax{100}{0} & \gval{2}{2} & \gval{26}{21} & \gval{22}{30} & \gval{93}{6} & \ourmax{99}{2} \\
                 & \texttt{task5}   & \bmax{8}{7} & \gval{0}{0} & \gval{0}{0} & \gval{0}{0} & \gval{0}{0} & \gval{0}{0} & \gval{0}{1} & \gval{0}{0} \\
                \multirow{-6}{*}{\texttt{scene}} & \texttt{agg. (5 tasks)}   & \gval{70}{5} & \gval{58}{1} & \bmax{80}{0} & \gval{36}{1} & \gval{64}{4} & \gval{63}{6} & \gval{79}{1} & \ourmax{79}{0} \\
                \midrule
                
                 & \texttt{task1}   & \gval{87}{11} & \gval{0}{0} & \bmax{100}{0} & \bmax{100}{1} & \gval{75}{15} & \gval{99}{2} & \bmax{100}{0} & \ourmax{100}{0} \\
                 & \texttt{task2}   & \gval{37}{39} & \gval{0}{0} & \bmax{100}{0} & \gval{23}{7} & \gval{0}{0} & \bmax{100}{0} & \bmax{100}{0} & \ourmax{100}{0} \\
                 & \texttt{task3}   & \gval{0}{0} & \gval{0}{0} & \bmax{100}{0} & \gval{82}{11} & \gval{0}{0} & \gval{76}{11} & \bmax{100}{0} & \ourmax{100}{1} \\
                 & \texttt{task4}   & \gval{0}{0} & \gval{0}{0} & \bmax{100}{0} & \gval{29}{8} & \gval{0}{0} & \gval{87}{6} & \bmax{100}{1} & \ourmax{100}{1} \\
                 & \texttt{task5}   & \gval{3}{6} & \gval{0}{0} & \bmax{100}{0} & \gval{86}{9} & \gval{0}{1} & \gval{84}{20} & \bmax{100}{0} & \ourmax{100}{0} \\
                \multirow{-6}{*}{\texttt{puzzle-3x3}} & \texttt{agg. (5 tasks)}   & \gval{25}{10} & \gval{0}{0} & \bmax{100}{0} & \gval{64}{4} & \gval{15}{3} & \gval{89}{4} & \bmax{100}{0} & \ourmax{100}{0} \\
                \midrule
                
                 & \texttt{task1}   & \gval{3}{3} & \gval{1}{1} & \gval{90}{5} & \gval{74}{11} & \gval{3}{4} & \gval{90}{10} & \bmax{100}{0} & \ourmax{97}{6} \\
                 & \texttt{task2}   & \gval{6}{9} & \gval{0}{0} & \gval{9}{9} & \gval{4}{3} & \gval{1}{1} & \gval{43}{22} & \bmax{99}{1} & \gval{73}{30} \\
                 & \texttt{task3}   & \gval{23}{15} & \gval{0}{0} & \gval{81}{20} & \gval{84}{6} & \gval{2}{3} & \gval{57}{10} & \bmax{98}{2} & \gval{81}{12} \\
                 & \texttt{task4}   & \gval{7}{5} & \gval{0}{0} & \gval{76}{28} & \gval{46}{13} & \gval{0}{1} & \gval{31}{16} & \bmax{100}{1} & \gval{86}{16} \\
                 & \texttt{task5}   & \gval{9}{16} & \gval{0}{0} & \gval{49}{43} & \gval{3}{1} & \gval{1}{1} & \gval{50}{22} & \bmax{98}{5} & \gval{90}{17} \\
                \multirow{-6}{*}{\texttt{puzzle-4x4-10M}} & \texttt{agg. (5 tasks)}   & \gval{9}{7} & \gval{0}{0} & \gval{61}{8} & \gval{42}{4} & \gval{1}{1} & \gval{54}{8} & \bmax{99}{1} & \gval{85}{6} \\
                \midrule
                
                 & \texttt{task1}   & \gval{75}{10} & \gval{83}{4} & \gval{85}{7} & \gval{18}{7} & \gval{96}{3} & \gval{94}{4} & \bmax{98}{2} & \gval{94}{5} \\
                 & \texttt{task2}   & \gval{59}{11} & \gval{66}{6} & \gval{85}{5} & \gval{9}{2} & \gval{81}{6} & \gval{82}{7} & \bmax{92}{3} & \gval{67}{10} \\
                 & \texttt{task3}   & \gval{39}{8} & \gval{72}{7} & \bmax{82}{10} & \gval{8}{4} & \gval{75}{7} & \gval{67}{8} & \gval{80}{12} & \gval{61}{10} \\
                 & \texttt{task4}   & \gval{6}{3} & \gval{15}{5} & \gval{35}{6} & \gval{2}{2} & \gval{24}{6} & \gval{29}{6} & \bmax{54}{8} & \gval{9}{3} \\
                 & \texttt{task5}   & \gval{44}{9} & \gval{37}{6} & \gval{71}{4} & \gval{7}{3} & \gval{78}{6} & \bmax{82}{6} & \bmax{82}{6} & \gval{61}{11} \\
                \multirow{-6}{*}{\texttt{cube-double}} & \texttt{agg. (5 tasks)}   & \gval{44}{4} & \gval{55}{2} & \gval{72}{4} & \gval{9}{2} & \gval{71}{2} & \gval{71}{3} & \bmax{81}{3} & \gval{58}{3} \\
                \midrule
                
                 & \texttt{task1}   & \gval{26}{20} & \gval{2}{3} & \gval{70}{18} & \gval{51}{22} & \gval{56}{22} & \gval{26}{11} & \gval{87}{8} & \ourmax{98}{4} \\
                 & \texttt{task2}   & \gval{1}{1} & \gval{0}{0} & \gval{16}{7} & \gval{16}{3} & \gval{15}{11} & \gval{14}{9} & \gval{42}{9} & \ourmax{75}{10} \\
                 & \texttt{task3}   & \gval{8}{6} & \gval{0}{0} & \gval{40}{8} & \gval{28}{8} & \gval{21}{11} & \gval{9}{6} & \gval{45}{9} & \ourmax{81}{11} \\
                 & \texttt{task4}   & \gval{2}{3} & \gval{0}{0} & \gval{15}{5} & \gval{1}{1} & \gval{4}{3} & \gval{3}{3} & \gval{30}{13} & \ourmax{68}{10} \\
                 & \texttt{task5}   & \gval{1}{1} & \gval{0}{0} & \gval{31}{12} & \gval{23}{8} & \gval{1}{1} & \gval{2}{4} & \bmax{48}{17} & \gval{37}{10} \\
                \multirow{-6}{*}{\texttt{cube-triple-10M}} & \texttt{agg. (5 tasks)}   & \gval{7}{5} & \gval{0}{1} & \gval{34}{6} & \gval{24}{7} & \gval{19}{6} & \gval{11}{4} & \gval{50}{5} & \ourmax{72}{4} \\
                \midrule
                
                 & \texttt{task1}   & \gval{36}{25} & \gval{7}{5} & \gval{36}{12} & \gval{15}{11} & \gval{70}{10} & \gval{45}{19} & \gval{66}{16} & \ourmax{90}{6} \\
                 & \texttt{task2}   & \gval{4}{5} & \gval{0}{1} & \gval{5}{3} & \gval{5}{7} & \gval{2}{4} & \gval{0}{0} & \gval{18}{11} & \ourmax{87}{6} \\
                 & \texttt{task3}   & \gval{3}{3} & \gval{0}{0} & \gval{2}{2} & \gval{6}{8} & \gval{16}{11} & \gval{2}{4} & \gval{11}{7} & \ourmax{62}{8} \\
                 & \texttt{task4}   & \gval{0}{0} & \gval{0}{0} & \gval{0}{1} & \gval{0}{0} & \gval{2}{2} & \gval{0}{0} & \gval{3}{2} & \ourmax{31}{17} \\
                 & \texttt{task5}   & \gval{0}{0} & \gval{0}{0} & \gval{1}{1} & \bmax{2}{3} & \gval{0}{0} & \gval{0}{0} & \gval{0}{0} & \gval{0}{0} \\
                \multirow{-6}{*}{\texttt{cube-quadruple-100M}} & \texttt{agg. (5 tasks)}   & \gval{9}{5} & \gval{1}{1} & \gval{9}{3} & \gval{6}{3} & \gval{18}{3} & \gval{9}{3} & \gval{19}{5} & \ourmax{54}{5} \\
                \midrule
                
                \texttt{all} & \texttt{agg. (50 tasks)} & \gavg{28} & \gavg{23} & \gavg{46} & \gavg{35} & \gavg{35} & \gavg{45} & \gavg{68} & \ouravg{75} \\
                
                \bottomrule
                \end{tabular}
            }
        \end{minipage}
    }
\end{table}

\begin{figure}[p]
\begin{center}
\includegraphics[width=\linewidth]{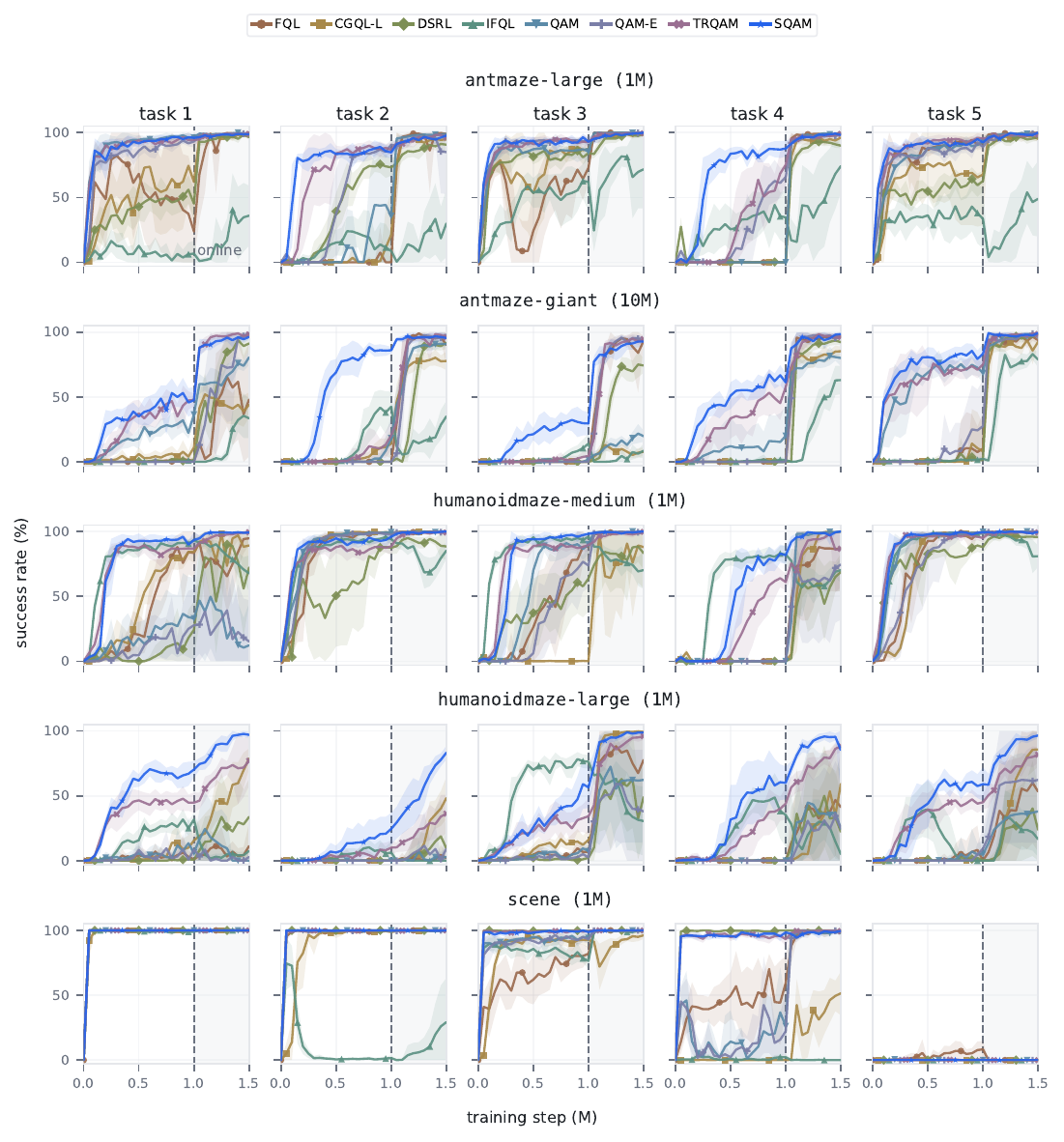}
\end{center}
\caption{Success rate curves on the locomotion domains and \texttt{scene}, with rows for domains and
columns for their five tasks. Offline training runs to 1M steps and online training to 1.5M, with the
dashed line at the transition. Eight seeds per curve, and bands are one standard deviation.}
\label{fig:bench-1}
\end{figure}

\begin{figure}[p]
\begin{center}
\includegraphics[width=\linewidth]{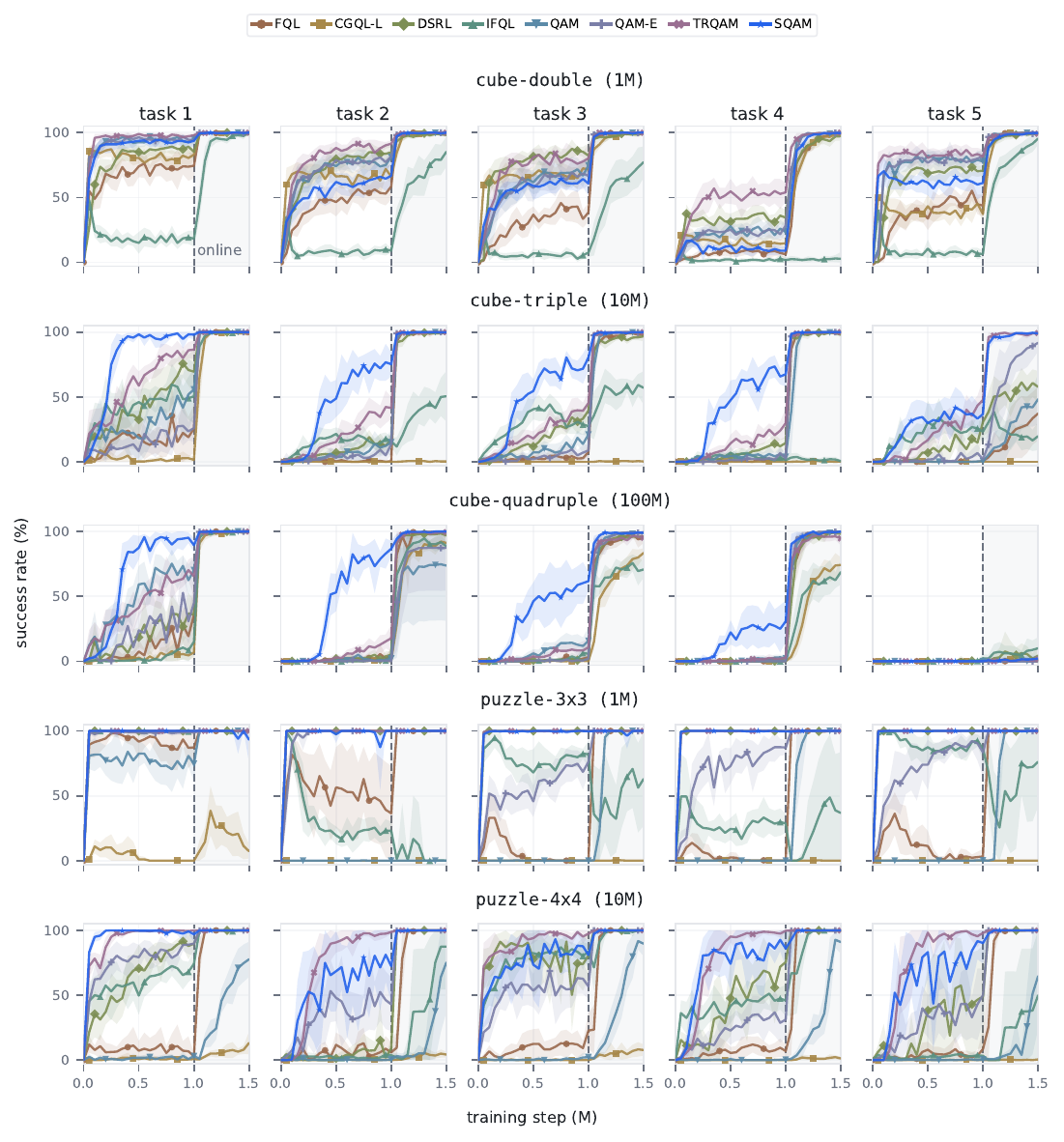}
\end{center}
\caption{Success rate curves on the \texttt{cube} and \texttt{puzzle} domains, with rows for domains and
columns for their five tasks. Offline training runs to 1M steps and online training to 1.5M, with the
dashed line at the transition. Eight seeds per curve, and bands are one standard deviation.}
\label{fig:bench-2}
\end{figure}

\newpage
\section{Limitations and future work}
\label{app:limitations}

We see three directions that this work leaves open.

\paragraph{Value learning for flow policies.} Our results suggest that the value learning scheme largely
determines how well the scalar adjoint performs, and \method{} addresses this with a value penalty added
to a standard critic. A value learning method designed for flow policies from the ground up may make
this dependence more reliable than a penalty on top of an existing critic.

\paragraph{Multi-task fine-tuning.} Updating the generative policy itself, rather than a separately
learned component, should matter most when a single policy serves many tasks. Our real-robot experiments
fine-tune the policy on each task separately. Fine-tuning a single policy across many tasks, as in
\citet{wang2026learningdeployingfleetscalereinforcement}, would test this benefit more directly.

\paragraph{Why the Jacobian concentrates on its diagonal.} We observe that the batch-averaged velocity
Jacobian of pretrained flow policies concentrates on its diagonal on every domain we measured, but we
do not explain why. A theoretical analysis of when this structure arises would clarify the conditions
under which the scalar adjoint applies.

\end{document}